\newtheorem{mylemma}{Lemma}
\newtheorem{mytheorem}{Theorem}
\newcommand{\mc}[3]{\multicolumn{#1}{#2}{#3}}
\newcommand{\mr}[3]{\multirow{#1}{#2}{#3}}
\begin{document}
%
\title{Correlated Logistic Model with Elastic Net Regularization for Multilabel Image Classification}
%
%
\author{Qiang~Li,
Bo~Xie,
Jane~You,
Wei~Bian,~\IEEEmembership{Member,~IEEE,}
and Dacheng~Tao,~\IEEEmembership{Fellow,~IEEE}
\thanks{}
\thanks{This research was supported in part by Australian Research Council Projects FT-130101457, DP-140102164 and LE-140100061. The funding support from the Hong Kong government under its General Research Fund (GRF) scheme (Ref. no. 152202/14E) and the Hong Kong Polytechnic University Central Research Grant is greatly appreciated.}
\thanks{Q. Li is with the Centre for Quantum Computation and Intelligent Systems, Faculty of Engineering and Information Technology, University of Technology Sydney, 81 Broadway, Ultimo, NSW 2007, Australia, and also with Department of Computing, The Hong Kong Polytechnic University, Hung Hom, Kowloon, Hong Kong (e-mail: leetsiang.cloud@gmail.com).}
\thanks{W. Bian and D. Tao are with the Centre for Quantum Computation and Intelligent Systems, Faculty of Engineering and Information Technology, University of Technology Sydney, 81 Broadway, Ultimo, NSW 2007, Australia (e-mail: wei.bian@uts.edu.au, dacheng.tao@uts.edu.au).}
\thanks{B. Xie is with College of Computing, Georgia Institute of Technology, Atlanta, GA 30345, USA (email: zixu1986@gmail.com).}
\thanks{J. You is with Department of Computing, The Hong Kong Polytechnic University, Hung Hom, Kowloon, Hong Kong (e-mail: csyjia@comp.polyu.edu.hk).}
\thanks{\textcircled{c}20XX IEEE. Personal use of this material is permitted. Permission from IEEE must be obtained for all other uses, in any current or future media, including reprinting/republishing this material for advertising or promotional purposes, creating new collective works, for resale or redistribution to servers or lists, or reuse of any copyrighted component of this work in other works.}
}


\maketitle

\begin{abstract}
In this paper, we present correlated logistic model (CorrLog) for multilabel image classification. CorrLog extends conventional Logistic Regression model into multilabel cases, via explicitly
modelling the pairwise correlation between labels. In addition, we propose to learn model parameters of CorrLog with Elastic Net regularization, which helps exploit the sparsity in feature selection and label correlations and thus further boost the performance of multilabel classification. CorrLog can be efficiently learned, though approximately, by regularized maximum pseudo
likelihood estimation (MPLE), and it enjoys a satisfying generalization bound
that is independent of the number of labels. CorrLog performs competitively for multilabel image classification on benchmark datasets MULAN scene, MIT outdoor scene, PASCAL VOC 2007 and PASCAL VOC 2012, compared to the state-of-the-art multilabel classification algorithms.
\end{abstract}

\begin{IEEEkeywords}
Correlated logistic model, elastic net, multilabel classification.
\end{IEEEkeywords}

\IEEEpeerreviewmaketitle

\section{Introduction} Multilabel classification (MLC) extends conventional single label
classification (SLC) by allowing an instance to be
assigned to multiple labels from a label set. It occurs naturally
from a wide range of practical problems, such as document
categorization, image classification, music annotation, webpage classification and bioinformatics applications, where each instance can be simultaneously described by several class labels out of a candidate label set.
MLC is also closely related to many other research areas, such as subspace learning \cite{bian2014asymptotic}, nonnegative matrix factorization \cite{liu2016perf}, multi-view learning \cite{xu2015multi} and multi-task learning \cite{liu2016algorithm}.
Because of its great generality and wide applications, MLC has received
increasing attentions in recent years from machine learning, data mining, to computer vision communities, and developed rapidly with both algorithmic and theoretical achievements
\cite{MLKNN,IBLR,HsuKLZ09,MLreview1,MLreview2,ReverseMLC}.

The key feature of MLC that makes it distinct from SLC is label correlation, without which classifiers can be trained independently for each individual label and MLC degenerates to SLC. The correlation between different labels can be verified by calculating the statistics, e.g., $\chi^2$ test and Pearson's correlation coefficient, of their distributions. According to \cite{LabelDependence}, there are two types of label correlations (or dependence), i.e., the conditional correlations and the unconditional correlations, wherein the former describes the label correlations conditioned on a given instance while the latter summarizes the global label correlations of only label distribution by marginalizing out the instance. From a classification point of view,
modelling of label conditional correlations is preferable
since they are directly related to prediction; however, proper utilization of unconditional correlations is also helpful, but in an average sense because of the marginalization. Accordingly, quite a number of MLC algorithms have been proposed in the past a few years, by exploiting either of the two types of label correlations,\footnote{Studies on MLC, from different perspectives rather than label correlations, also exit in the literature, e.g., by defining different loss functions, dimension reduction and classifier ensemble methods, but are not in the scope of this paper. } and below, we give a brief review of the representative ones. As it is a very big literature, we cannot cover all the algorithms. The recent surveys \cite{MLreview1,MLreview2}
contain many references omitted from this paper.
\begin{itemize}

\item \textit{By exploiting unconditional label correlations:}
A large class of MLC algorithms that utilize unconditional label correlations are built upon label transformation. The key idea is to find new representation for the label vector (one dimension corresponds to an individual label), so that the transformed labels or responses are uncorrelated and thus can be predicted independently. Original label vector needs to be recovered after the prediction. MLC algorithms using label transformation include \cite{YuYTK06} which utilizes low-dimensional embedding and \cite{HsuKLZ09} and \cite{Zhou:2012} which use random projections. Another strategy of using unconditional label correlations, e.g., used in the stacking method \cite{IBLR} and the ``Curds'' \& ``Whey'' procedure \cite{CandW}, is first to predict each individual label independently and correct/adjust the prediction by proper post-processing. Algorithms are also proposed based on
co-occurrence or structure information extracted from the label set,
which include random $k$-label sets (RAKEL) \cite{Rakel},
pruned problem transformation (PPT) \cite{PPT_MLC}, hierarchical
binary relevance (HBR) \cite{HBR} and hierarchy of multilabel
classifiers (HOMER) \cite{MLreview1}. Regression-based models, including reduced-rank regression and multitask learning, can also be used for MLC, with an interpretation of utilizing unconditional label correlations \cite{LabelDependence}.

\item \textit{By exploiting conditional label correlations:}
MLC algorithms in this category are diverse and often developed by specific heuristics.
For example, multilabel $K$-nearest neighbour
(MLkNN) \cite{MLKNN} extends KNN to the multilabel situation, which applies maximum a posterior (MAP) label prediction by obtaining the prior label distribution within the $K$ nearest neighbours of an instance. Instance-based logistic regression (IBLR)
\cite{IBLR} is also a localized algorithm, which modifies logistic regression by using label information from the neighbourhood as features. Classifier chain (CC)
\cite{ClassifierChain}, as well as its ensemble and probabilistic
variants \cite{PCC}, incorporate label correlations into a chain of
binary classifiers, where the prediction of a label uses previous
labels as features.
Channel coding based MLC techniques such as principal label space transformation (PLST) \cite{tai2012multilabel} and maximum margin output coding (MMOC) \cite{zhang2012maximum} proposed to select codes that exploits conditional label correlations.
Graphical models, e.g., conditional random fields (CRFs) \cite{CRF_Lafferty2001}, are also applied to MLC, which provides a richer framework to handle conditional label correlations.

\end{itemize}

\subsection{Multilabel Image Classification}
Multilabel image classification
belongs to the generic scope of MLC, but handles the specific problem of
predicting the presence or absence of multiple object categories in an image.
Like many related high-level vision tasks such as object recognition \cite{belongie2002shape,wang2013improved}, visual tracking \cite{wang2015video}, image annotation \cite{feng2004multiple,fan2011structured,mao2013objective} and scene classification \cite{boutell2004learning,song2010biologically,zuo2014learning},
multilabel image classification \cite{luo2013multiview,luo2013manifold,sun2014multi,zhang2014multilabel,luo2015multiview,xu2016local} is very challenging due to large intra-class variation.
In general, the variation is caused by viewpoint, scale, occlusion, illumination, semantic context, etc.

On the one hand, many effective image representation schemes have been developed to handle this high-level vision task.
Most of the classical approaches derive from handcrafted image features, such as GIST \cite{oliva2001modeling}, dense SIFT \cite{bosch2007image}, VLAD \cite{jegou2010aggregating}, and object bank \cite{li2010object}.
In contrast, the very recent deep learning techniques have also been developed for image feature learning, such as deep CNN features \cite{krizhevsky2012imagenet, Chatfield14}. These techniques are more powerful than classical methods when learning from a very large amount of unlabeled images.

On the other hand, label correlations have also been exploited to significantly improve image classification performance.
Most of the current multilabel image classification algorithms are motivated by considering label correlations conditioned on image features, thus intrinsically falls into the CRFs framework.
For example, probabilistic label enhancement model (PLEM) \cite{li2014multi} designed to exploit image label co-occurrence pairs based on a maximum spanning tree construction
and a piecewise procedure is utilized to train the pairwise CRFs model.
More recently, clique generating machine (CGM) \cite{tan2015learning} proposed to learn the image label graph structure and parameters by iteratively activating a set of cliques. It also belongs to the CRFs framework, but the labels are not constrained to be all connected which may result in isolated cliques.

\subsection{Motivation and Organization}
Correlated logistic model (CorrLog) provides a more principled way to handle conditional label correlations, and enjoys several favourable properties: 1) built upon independent logistic regressions (ILRs), it offers an explicit way to model the pairwise (second order) label correlations; 2) by using the pseudo likelihood technique, the parameters of CorrLog can be learned approximately with a computational complexity linear with respect to label number; 3) the learning of CorrLog is stable,
and the empirically learned model enjoys a generalization error bound that is independent of label number. In addition, the results presented in this paper extend our previous study \cite{bian2012corrlog} in following aspects: 1) we introduce elastic net regularization to CorrLog, which facilitates the utilization of the sparsity in both feature selection and label correlations; 2) a learning algorithm for CorrLog based on soft thresholding is derived to handle the nonsmoothness of the elastic net regularization; 3) the proof of generalization bound is also extended for the new regularization;
4) we apply CorrLog to multilabel image classification, and achieve competitive results with the state-of-the-art methods of this area.

To ease the presentation, we first summarize the important notations in Table \ref{tab:notation}.
The rest of this paper is organized as follows. Section II introduces the model CorrLog with elastic net regularization. Section III presents algorithms for learning CorrLog by regularized maximum pseudo likelihood estimation, and for prediction with CorrLog by message passing. A generalization analysis of CorrLog based on the concept of algorithm stability is presented in Section IV. Section V to Section VII report results of empirical evaluations,
including experiments on synthetic dataset and on benchmark multilabel image classification datasets.

\newcolumntype{C}[1]{>{\centering\arraybackslash}m{#1}}
\begin{table}
\renewcommand{\arraystretch}{1.4}
\caption{Summary of important notations throughout this paper.}
\begin{center}
\begin{tabular}{C{2.4cm}||m{5.2cm}}
\hline\hline

Notation & Description \\
\hline
$\mathcal D = \{\mathbf x^{(l)},\mathbf y^{(l)}\}$ & training dataset with $n$ examples, $1\le l\le n$\\
\hline
$\mathcal D^k$ & modified training data set by replacing the $k$-th example of $\mathcal
D$ with an independent example\\
\hline
$\mathcal D^{\backslash k}$ & modified training data set by discarding the $k$-th example of $\mathcal D$\\
\hline
$\widetilde{\mathcal L} (\Theta)$& negative log pseudo likelihood over training dataset $\mathcal D^k$\\
\hline
$\widetilde{\mathcal L}_r(\Theta)$& regularized negative log pseudo likelihood over training dataset $\mathcal D^{\backslash k}$\\
\hline
$R_{en}(\Theta;\lambda_1,\lambda_2,\epsilon) $ & elastic net regularization with weights $\lambda_1$, $\lambda_2$ and parameter $\epsilon$\\
\hline
$\Theta=\{\boldsymbol\beta,\boldsymbol\alpha\}$ & model parameters of CorrLog\\
\hline
$\widetilde\Theta=\{\widetilde{\boldsymbol\beta},\widetilde{\boldsymbol\alpha}\} $ & empirical learned model parameters by maximum pseudo likelihood estimation over $\mathcal D$\\
\hline
$\widetilde\Theta^k=\{\widetilde{\boldsymbol\beta}^k,\widetilde{\boldsymbol\alpha}^k\}$ & empirical learned model parameters over $\mathcal D^k$ \\
\hline
$ \widetilde\Theta^{\backslash k}=\{\widetilde{\boldsymbol\beta}^{\backslash k},\widetilde{\boldsymbol\alpha}^{\backslash k}\}$ & empirical learned model parameters over $\mathcal D^{\backslash k}$\\
\hline
$\widetilde{\mathcal R}(\widetilde\Theta)$ & empirical error of the empirical model $\widetilde\Theta$ over training set $\mathcal D$\\
\hline
$\mathcal R(\widetilde\Theta)$ & generalization error of the empirical model $\widetilde\Theta$\\
\hline\hline
\end{tabular}
\end{center}
\label{tab:notation}
\end{table}

\section{Correlated Logistic Model}
We study the problem of learning a joint prediction $\mathbf
y=d(\mathbf x):\mathcal X \mapsto \mathcal Y$, where the instance
space $\mathcal X=\{\mathbf x:\|\mathbf x\|\le 1, \mathbf
x\in\mathbb R^D\}$ and the label space $\mathcal Y = \{-1,1\}^m$. By
assuming the conditional independence among labels, we can model MLC
by a set of independent logistic regressions (ILRs). Specifically, the conditional probability $p_{lr}(\mathbf y|\mathbf x)$ of ILRs is given by
\begin{equation}\label{eq:inde_prod}
\begin{aligned}
p_{lr}(\mathbf y|\mathbf x) &= \prod_{i=1}^mp_{lr}(\mathbf
y_i|\mathbf x) \\
&= \prod_{i=1}^m\frac{\exp\left(\mathbf y_i
\beta_i^T\mathbf x\right)}{\exp\left(\mathbf \beta_i^T\mathbf
x\right)+\exp\left(-\mathbf \beta_i^T\mathbf x\right)},
\end{aligned}
\end{equation}
where $\beta_i\in\mathbb R^D$ is the coefficients for the $i$-th
logistic regression (LR) in ILRs. For the convenience of expression, the bias of
the standard LR is omitted here, which is equivalent to augmenting the feature of
$\mathbf x$ with a constant.

Clearly, ILRs (\ref{eq:inde_prod}) enjoys several merits, such as, it can be
learned efficiently, in particular with a linear computational complexity with respect to label number $m$, and its probabilistic formulation inherently helps deal with the imbalance of positive and negative examples for each label, which is a common problem encountered by MLC. However, it ignores entirely the potential correlation among labels and thus tends to under-fit the true posterior $p_0(\mathbf
y|\mathbf x)$, especially when the label number $m$ is large.

\subsection{Correlated Logistic Regressions}
CorrLog tries to extend ILRs with as small effort as possible, so that the correlation among labels is explicitly modelled while the advantages of ILRs can be also preserved. To achieve this, we propose to
augment (\ref{eq:inde_prod}) with a simple function $q(\mathbf y)$
and reformulate the posterior probability as
\begin{equation}\label{eq:augment}
p(\mathbf y|\mathbf x) \propto p_{lr}(\mathbf y|\mathbf x)q(\mathbf
y).
\end{equation}
As long as $q(\mathbf y)$ cannot be decomposed into independent product terms for individual labels, it introduces label correlations into $p(\mathbf y|\mathbf x)$. It is worth noticing that we assumed $q(\mathbf y)$ to be independent of $\mathbf x$. Therefore, (\ref{eq:augment}) models label correlations in an average
sense. This is similar to the concept of ``marginal correlations'' in
MLC \cite{LabelDependence}. However, they are intrinsically
different, because (\ref{eq:augment}) integrate the correlation into the posterior
probability, which directly aims at prediction. In addition, the
idea used in (\ref{eq:augment}) for correlation modelling is also
distinct from the ``Curds and Whey'' procedure in \cite{CandW} which
corrects outputs of multivariate linear regression by reconsidering
their correlations to the true responses.

In this paper, we choose $q(\mathbf y)$ to be the following quadratic form,
\begin{equation}\label{eq:q_y}
q(\mathbf y) = \exp\left\{\sum_{i<j}\alpha_{ij}\mathbf y_i\mathbf
y_j\right\}.
\end{equation}
It means that $\mathbf y_i$ and $\mathbf y_j$ are positively correlated given
$\alpha_{ij}>0$ and negatively correlated given $\alpha_{ij}<0$. It is also
possible to define $\alpha_{ij}$ as functions of $\mathbf x$, but
this will drastically increase the number of model parameters, e.g., by
$\mathcal O(m^2D)$ if linear functions are used.

By substituting (\ref{eq:q_y}) into (\ref{eq:augment}), we
obtain the conditional probability for CorrLog
\begin{equation}\label{eq:alg}
p(\mathbf y|\mathbf x;\Theta)\propto\exp\left\{\sum_{i=1}^m\mathbf
y_i\beta_i^T\mathbf x + \sum_{i<j}\alpha_{ij}\mathbf y_i\mathbf
y_j\right\},
\end{equation}
where the model parameter $\Theta=\{\boldsymbol\beta,\boldsymbol\alpha\}$ contains
$\boldsymbol\beta=[\beta_1,...,\beta_m]$ and $\boldsymbol\alpha =
[\alpha_{12},...,\alpha_{(m-1)m}]^T$. It can be seen that CorrLog is a simple
modification of (\ref{eq:inde_prod}), by using a quadratic term to
adjust the joint prediction, so that hidden label correlations can be exploited. In addition,
CorrLog is closely related to popular statistical models for joint
modelling of binary variables. For example, conditional on $\mathbf x$, (\ref{eq:alg}) is exactly an
Ising model \cite{IsingModel_SparseLogReg} for $\mathbf y$. It can also be treated as a special instance
of CRFs \cite{CRF_Lafferty2001}, by
defining features $\phi_i(\mathbf x,\mathbf y)=\mathbf y_i\mathbf x$
and $\psi_{ij}(\mathbf y)=\mathbf y_i\mathbf y_j$. Moreover,
classical model multivariate probit (MP) \cite{MultivariteProbit}
also models pairwise correlations in $\mathbf y$. However, it
utilizes Gaussian latent variables for correlation modelling, which
is essentially different from CorrLog.

\subsection{Elastic Net Regularization}
Given a set of training data $\mathcal D = \{\mathbf x^{(l)},\mathbf y^{(l)}: 1\le l\le n\}$, CorrLog can be learned by regularized maximum log likelihood estimation (MLE), i.e.,
\begin{equation}\label{eq:MLE}
\widehat\Theta=\arg\min_{\Theta}{\mathcal L}(\Theta)+ R (\Theta),
\end{equation}
where $\mathcal L(\Theta)$ is the negative log likelihood
\begin{equation}\label{eq:LogLikelihood}
{\mathcal L}(\Theta) = -\frac1n\sum_{l=1}^n\log
p(\mathbf y^{(l)}|\mathbf x^{(l)};\Theta),
\end{equation}
and $ R(\Theta)$ is a properly chosen regularization.

A possible choice for $ R(\Theta)$ is the $\ell_2$ regularizer,
\begin{align}
R_2(\Theta;\lambda_1,\lambda_2)=\lambda_1 \sum_{i=1}^m\|\beta_i\|_2^2 + \lambda_2\sum_{i<j}|\alpha_{ij}|^2,
\end{align}
with $\lambda_1$, $\lambda_2>0$ being the weighting parameters. The $\ell_2$ regularization enjoys the merits of computational flexibility and learning stability. However, it is unable to exploit any sparsity that can be possessed by the problem at hand. For example, for MLC, it is likely that the prediction of each label $\mathbf y_i$ only depends on a subset of the $D$ features of $\mathbf x$, which implies the sparsity of $\beta_i$. Besides, $\boldsymbol\alpha$ can also be sparse since not all labels in $\mathbf y$ are correlated to each other. $\ell_1$ regularizer is another choice for $\mathcal R(\Theta)$, especially regarding model sparsity. Nevertheless, it has been noticed by several studies that $\ell_1$ regularized algorithms are inherently unstable, that is, a slight change of the training data set can lead to substantially different prediction models. Based on above consideration, we propose to use the elastic net regularizer \cite{zou05regularization}, which is a combination of $\ell_2$ and $\ell_1$ regularizers and inherits their individual advantages, i.e., learning stability and model sparsity,
\begin{align}
R_{en}(\Theta;\lambda_1,\lambda_2,\epsilon) &= \lambda_1 \sum_{i=1}^m(\|\beta_i\|_2^2 + \epsilon\|\beta_i\|_1) \notag\\
&+ \lambda_2\sum_{i<j}(|\alpha_{ij}|^2 + \epsilon|\alpha_{ij}|),
\end{align}
where $\epsilon\ge0$ controls the trade-off between the $\ell_1$ regularization and the $\ell_2$ regularization, and large $\epsilon$ encourages a high level of sparsity.

\section{Algorithms}
In this section, we derive algorithms for learning and prediction with CorrLog. The exponentially large size of the label space $\mathcal Y=\{-1,1\}^m$ makes exact algorithms for CorrLog computationally intractable, since the conditional probability (\ref{eq:alg}) needs to be normalized by the partition function
\begin{align}
A(\Theta) = \sum_{y\in\mathcal Y}\exp\left\{\sum_{i=1}^m\mathbf
y_i\beta_i^T\mathbf x + \sum_{i<j}\alpha_{ij}\mathbf y_i\mathbf
y_j\right\},
\end{align}
which is a summation over an exponential number of terms. Thus, we turn to approximate learning and prediction algorithms, by exploiting the pseudo likelihood and the message passing techniques.

\subsection{Approximate Learning via Pseudo Likelihood}
Maximum pseudo
likelihood estimation (MPLE) \cite{PseudoLikelihoodBesag75} provides an alternative approach for estimating model parameters, especially when the partition function of the likelihood cannot be evaluated efficiently.
It was developed in the field of spatial dependence analysis and has been widely applied to the estimation of various statistical models,
from the Ising model \cite{IsingModel_SparseLogReg} to the CRFs \cite{CRF_PseudoLikelihood}.
Here, we apply MPLE to the learning of parameter $\Theta$ in CorrLog.

The pseudo likelihood of the model over $m$ jointly distributed random variables is defined as the product of the conditional probability of each individual random variables conditioned on all the rest ones. For CorrLog (\ref{eq:alg}), its pseudo likelihood is given by
\begin{equation}\label{eq:cpl}
\widetilde p(\mathbf y|\mathbf x;\Theta)= \prod_{i=1}^mp(\mathbf
y_i|\mathbf y_{-i},\mathbf x;\Theta),
\end{equation}
where $\mathbf y_{-i}=[\mathbf y_1,...,\mathbf y_{i-1},\mathbf
y_{i+1},...,\mathbf y_{m}]$ and the conditional probability
$p(\mathbf y_i|\mathbf y_{-i},\mathbf x;\Theta)$ can be directly
obtained from (\ref{eq:alg}),
\begin{equation}\label{eq:cond_prob}
\begin{aligned}
&p(\mathbf y_i|\mathbf y_{-i},\mathbf x;\Theta)
=\\
&\frac1{1+\exp\left\{-2\mathbf y_i\left(\beta_i^T\mathbf x +
\sum_{j=i+1}^m\alpha_{ij}\mathbf y_j +
\sum_{j=1}^{i-1}\alpha_{ji}\mathbf y_j\right)\right\}}.
\end{aligned}
\end{equation}
Accordingly, the negative log pseudo likelihood over the training data $\mathcal D$ is given by
\begin{equation}\label{eq:L}
\widetilde{\mathcal L}(\Theta) = -\frac1n\sum_{l=1}^n\sum_{i=1}^m\log
p(\mathbf y_{i}^{(l)}|\mathbf y_{-i}^{(l)},\mathbf x^{(l)};\Theta).
\end{equation}
To this end, the optimal model parameter $\widetilde\Theta=\{\widetilde{\boldsymbol\beta},\widetilde{\boldsymbol\alpha}\}$ of CorrLog can be learned approximately by the elastic net regularized MPLE,
\begin{align}\label{eq:estimation}
\widetilde\Theta&= \arg\min_{\Theta}\widetilde{\mathcal L}_r(\Theta)\notag\\
&=\arg\min_{\Theta} \widetilde{\mathcal
L}(\Theta) + R_{en}(\Theta;\lambda_1,\lambda_2,\epsilon) .
\end{align}
where $\lambda_1$, $\lambda_2$ and $\epsilon$ are tuning parameters.

\noindent
\textbf{A First-Order Method by Soft Thresholding:}
Problem (\ref{eq:estimation}) is a convex optimization problem, thanks to the convexity of the logarithmic loss function and the elastic net regularization, and thus a unique optimal solution. However, the elastic net regularization is non-smooth due to the $\ell_1$ norm regularizer, which makes direct gradient based algorithm inapplicable. The main idea of our algorithm for solving (\ref{eq:estimation}) is to divide the objective function into smooth and non-smooth parts, and then apply the soft thresholding technique to deal with the non-smoothness.

Denoting by $J_s(\Theta)$ the smooth part of $\widetilde{\mathcal L}_r(\Theta)$, i.e.,
\begin{align}
J_s(\Theta) = \widetilde{\mathcal L}(\Theta) + \lambda_1 \sum_{i=1}^m\|\beta_i\|_2^2 + \lambda_2\sum_{i<j}|\alpha_{ij}|^2,
\end{align}
its gradient $\nabla{J_s}$ at the $k$-th iteration $\Theta^{(k)} = \{\boldsymbol\beta^{(k)},\boldsymbol{\alpha}^{(k)}\}$ is given by

\begin{equation}\label{eq:gradients}
\left\{\begin{array}{l}
\nabla{J_s}_{\beta_{i}} (\Theta^{(k)})=\frac1n\sum\limits_{l=1}^n\xi_{li}\mathbf x^{(l)} + 2\lambda_1\beta_{i}^{(k)}\\
\nabla{J_s}_{\alpha_{ij}} (\Theta^{(k)})=\frac1n\sum\limits_{l=1}^n\left(\xi_{li}\mathbf y_{j}^{(l)} +\xi_{lj}\mathbf y_i^{(l)}\right)+2\lambda_2\alpha_{ij}^{(k)}\\
\end{array} \right.
\end{equation}
with
\begingroup\makeatletter\def\f@size{9}\check@mathfonts 
\begin{equation}\label{eq:xi}
\begin{aligned}
&\xi_{li} = \\
&\frac{-2\mathbf y_i^{(l)}}{1+\exp\left\{2\mathbf
y_i^{(l)}\left(\beta_i^{(k)T}\mathbf x^{(l)} +
\sum_{j=i+1}^m\alpha_{ij}^{(k)}\mathbf y_j^{(l)} +
\sum_{j=1}^{i-1}\alpha_{ji}^{(k)}\mathbf y_j^{(l)}\right)\right\}}.
\end{aligned}
\end{equation}
\endgroup 
Then, a surrogate $J(\Theta)$ of the objective function $\widetilde{\mathcal L}_r(\Theta)$ in (\ref{eq:estimation}) can be obtained by using $\nabla{J_s}(\Theta^{(k)})$, i.e.,
\begingroup\makeatletter\def\f@size{9}\check@mathfonts 
\begin{align}\label{eq:surrogate}
&J(\Theta;\Theta^{(k)}) = J_s(\Theta^{(k)})\notag\\
&+\sum_{i=1}^m \langle \nabla{J_s}_{\beta_i}(\Theta^{(k)}), \beta_i - \beta_i^{(k)}\rangle +
\frac1{2\eta}\|\beta_i - \beta_i^{(k)}\|_2^2 + \lambda_1\epsilon\|\beta_i\|_1\notag\\
&+ \sum_{i<j} \langle \nabla{J_s}_{\alpha_{ij}}(\Theta^{(k)}), \alpha_{ij} - \alpha_{ij}^{(k)}\rangle +\frac1{2\eta}(\alpha_{ij} - \alpha_{ij}^{(k)})^2 + \lambda_2\epsilon|\alpha_{ij}|.\notag\\
\end{align}
\endgroup 
The parameter $\eta$ in (\ref{eq:surrogate}) servers a similar role to the variable updating step size in gradient descent methods, and it is set such that $1/\eta$ is larger than the Lipschitz constant of $\nabla{J_s}(\Theta^{(k)})$. For such $\eta$, it can be shown that $J(\Theta)\ge\widetilde{\mathcal L}_r(\Theta)$ and $J(\Theta^{(k)})=\widetilde{\mathcal L}_r(\Theta^{(k)})$. Therefore, the update of $\Theta$ can be realized by the minimization
\begin{align}
\Theta^{(k+1)} = \arg\min_{\Theta} J(\Theta;\Theta^{(k)}),
\end{align}
which is solved by the soft thresholding function $\mathcal S(\cdot)$, i.e.,
\begin{equation}\label{eq:soft-thresholding}
\left\{\begin{array}{l}
\beta_{i}^{(k+1)} = \mathcal S(\beta_i^{(k)} - \eta\nabla{J_{s\beta_i}}(\Theta^{(k)});\lambda_1\epsilon)\\
\alpha_{ij}^{(k+1)} = \mathcal S(\alpha_{ij}^{(k)} - \eta\nabla{J_{s\alpha_{ij}}}(\Theta^{(k)});\lambda_2\epsilon),
\end{array} \right.
\end{equation}
where
\begin{align}
\mathcal S(u;\rho) = \left\{\begin{array}{ll}
u-0.5\rho, & \mbox{if}~u>0.5\rho\\
u+0.5\rho, & \mbox{if}~u<-0.5\rho\\
0, & \mbox{otherwise.}\\
\end{array}\right.
\end{align}
Iteratively applying (\ref{eq:soft-thresholding}) until convergence provides a first-order method for solving (\ref{eq:estimation}). Algorithm 1 presents the pseudo code for this procedure.

\begin{algorithm}[tb]
\caption{Learning CorrLog by Maximum Pseudo Likelihood Estimation with Elastic Net Regularization}
\label{alg:example}
\begin{algorithmic}
\STATE {\bfseries Input:} Training data $\mathcal D$, initialization $\boldsymbol\beta^{(0)}=\mathbf 0$, $\boldsymbol\alpha^{(0)}=\mathbf
0$, and learning rate
$\eta$, where $1/\eta$ is set larger than the Lipschitz constant of
$\nabla J_s(\Theta)$ (\ref{eq:surrogate}).
\STATE {\bfseries Output:} Model parameters
$\widetilde{\Theta}=({\widetilde{\boldsymbol\beta}^{(t)}},\widetilde{\boldsymbol\alpha}^{(t)})$.
\REPEAT

\STATE Calculating the gradient of $J_S(\Theta)$ at ${\Theta}^{(k)}=({{\boldsymbol\beta}^{(k)}},{\boldsymbol\alpha}^{(k)})$ by using (\ref{eq:gradients});
\STATE Updating ${\Theta}^{(k+1)}=({{\boldsymbol\beta}^{(k+1)}},{\boldsymbol\alpha}^{(k+1)})$ by using soft thresholding (\ref{eq:soft-thresholding});
\STATE\quad $k = k +1$
\UNTIL{Converged}
\end{algorithmic}
\end{algorithm}

\noindent
\textbf{Remark 1} From the above derivation, especially equations (\ref{eq:gradients}) and (\ref{eq:soft-thresholding}), the computational complexity of our learning algorithm is linear with respect to the label number $m$. Therefore, learning CorrLog is no more expensive than learning $m$ independent logistic regressions, which makes CorrLog scalable to the case of large label numbers.

\noindent
\textbf{Remark 2} It is possible to further speed up the learning algorithm. In particular, Algorithm 1 can be modified to have the optimal convergence rate in the sense of Nemirovsky and Yudin \cite{NemirovskyYudin}, i.e., $\mathcal O(1/k^2)$ wherein $k$ is the number of iterations. However, its convergence is usually as slow as in standard gradient descent methods. Actually, we only need to replace the current variable $\Theta^{(k)}$ in the surrogate (\ref{eq:surrogate}) by a weighted combination of the variables from previous iterations. As such modification is a direct application of the fast iterative shrinkage thresholding, \cite{FISTA}, we do not present the details here but leave readers to the reference.

\subsection{Joint Prediction by Message Passing}
For MLC, as the labels are not independent in general, the prediction task is actually a joint maximum a posterior (MAP) estimation over $p(\mathbf y|\mathbf x)$. In the case of CorrLog, suppose the model parameter $\widetilde{\Theta}$ is learned by the regularized MPLE from the last subsection, the
prediction of $\widehat{\mathbf y}$ for a new instance $\mathbf
x$ can be obtained by
\begin{align}\label{eq:map}
\widehat{\mathbf y} &= \arg\max_{\mathbf y\in\mathcal Y}p(\mathbf
y|\mathbf x;\widetilde{\Theta})\notag\\
&=\arg\max_{\mathbf y\in\mathcal Y} \exp\left\{\sum_{i=1}^m\mathbf
y_i\widetilde{\beta}_i^{ T}\mathbf x +
\sum_{i<j}\widetilde\alpha_{ij}\mathbf y_i\mathbf y_j\right\}.
\end{align}
We use the belief propagation (BP) to solve (\ref{eq:map}) \cite{Bishop2006}. Specifically, we run the max-product algorithm with uniformly initialized messages and an early stopping criterion with 50 iterations. Since the graphical model defined by $\boldsymbol\alpha$ in (\ref{eq:map}) has loops, we cannot guarantee the convergence of the algorithm. However, we found that it works well on all experiments in this paper.

\section{Generalization Analysis}
An important issue in designing a machine learning algorithm is generalization, i.e., how the algorithm will perform on the test data compared to on the training data. In the section, we present a generalization analysis for CorrLog, by using the concept of algorithmic stability \cite{StabilityAndGeneralization}. Our analysis follows two steps. First, we show that the
learning of CorrLog by MPLE is stable, i.e., the learned model parameter $\widetilde\Theta$ does not vary much given a slight change of the training data set $\mathcal D$. Then, we prove that the generalization error of CorrLog can be bounded by the
empirical error, plus a term related to the stability but independent
of the label number $m$.

\subsection{The Stability of MPLE}
The stability of a learning algorithm indicates how much the learned
model changes according to a small change of the training data set.
Denote by $\mathcal D^k$ a modified training data set the same with $\mathcal
D$ but replacing the $k$-th training example $(\mathbf x^{(k)},\mathbf
y^{(k)})$ by another independent example $(\mathbf x',\mathbf y')$.
Suppose $\widetilde{\Theta}$ and $\widetilde{\Theta}^k$ are the model parameters learned by MPLE
(\ref{eq:estimation}) on $\mathcal D$ and $\mathcal D^k$,
respectively. We intend to show that the difference between these two models, defined as
\begin{equation}
\|\widetilde{\Theta}^k-\widetilde{\Theta}\|\triangleq
\sum_{i=1}^m\|\widetilde{\boldsymbol\beta}_i^{k}-\widetilde{\boldsymbol\beta}_i\|
+
\sum_{i<j}|\widetilde{\boldsymbol\alpha}_{ij}^{k}-\widetilde{\boldsymbol\alpha}_{ij}|,
\mbox{~}\forall~ 1\le k\le n,
\end{equation}
is bounded by an order of $\mathcal O(1/n)$, so that the learning is stable for large $n$.

First, we need the following auxiliary model
$\widetilde{\Theta}^{\backslash k}=\{\widetilde{\boldsymbol\beta}^{\backslash k},\widetilde{\boldsymbol\alpha}^{\backslash k}\}$ learned on $\mathcal D^{\backslash k}$, which is the same with $\mathcal D$ but without the $k$-th example
\begin{equation}\label{eq:auxiliary}
\widetilde\Theta^{\backslash k}= \arg\min_{\Theta}{\widetilde{\mathcal L}}^{\backslash k}(\Theta)+ \mathcal R_{en}(\Theta;\lambda_1,\lambda_2,\epsilon),
\end{equation}
where
\begin{equation}\label{eq:L}
\widetilde{\mathcal L}^{\backslash k}(\Theta) = -\frac1n\sum_{l\ne
k}\sum_{i=1}^m\log p(\mathbf y_{i}^{(l)}|\mathbf
y_{-i}^{(l)},\mathbf x^{(l)};\Theta).
\end{equation}

The following Lemma provides an upper bound of the difference $\widetilde{\mathcal L}_r(\widetilde\Theta^{\backslash
k})-\widetilde{\mathcal L}_r(\widetilde\Theta)$.

\begin{mylemma} Given $\widetilde{\mathcal L}_r(\cdot)$ and $\widetilde\Theta$ defined in (\ref{eq:estimation}), and $\widetilde\Theta^{\backslash k}$ defined in (\ref{eq:auxiliary}), it holds for $\forall 1\le k\le n$,
\begingroup\makeatletter\def\f@size{9}\check@mathfonts 
\begin{align}\label{eq:lemma1_1}
&\widetilde{\mathcal L}_r(\widetilde\Theta^{\backslash k}) -\widetilde{\mathcal
L}_r(\widetilde\Theta)\le \notag\\
& \frac1n\left(\sum_{i=1}^m\log p(\mathbf
y_{i}^{(k)}|\mathbf y_{-i}^{(k)},\mathbf
x^{(k)};\widetilde\Theta^{\backslash k})-\sum_{i=1}^m\log p(\mathbf
y_{i}^{(k)}|\mathbf y_{-i}^{(k)},\mathbf
x^{(k)};\widetilde\Theta)\right)
\end{align}
\endgroup 
\end{mylemma}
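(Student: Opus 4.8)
The plan is to exploit the elementary fact that $\widetilde{\mathcal L}_r$, which is minimized by $\widetilde\Theta$, and the objective of (\ref{eq:auxiliary}), which is minimized by $\widetilde\Theta^{\backslash k}$, differ only by the single summand contributed by the $k$-th training example, while the elastic-net term $R_{en}$ is identical in the two problems and therefore cancels. Write $c_k(\Theta) := -\sum_{i=1}^m \log p(\mathbf y_i^{(k)}|\mathbf y_{-i}^{(k)},\mathbf x^{(k)};\Theta)$ for the per-example negative log pseudo likelihood; it is nonnegative, since each conditional factor (\ref{eq:cond_prob}) lies in $(0,1)$. Because $\widetilde{\mathcal L}$ in (\ref{eq:L}) and $\widetilde{\mathcal L}^{\backslash k}$ carry the same $1/n$ prefactor, we have the exact decomposition $\widetilde{\mathcal L}(\Theta) = \widetilde{\mathcal L}^{\backslash k}(\Theta) + \tfrac1n c_k(\Theta)$, and adding $R_{en}(\Theta;\lambda_1,\lambda_2,\epsilon)$ to both sides gives $\widetilde{\mathcal L}_r(\Theta) = \widetilde{\mathcal L}_r^{\backslash k}(\Theta) + \tfrac1n c_k(\Theta)$, where $\widetilde{\mathcal L}_r^{\backslash k} := \widetilde{\mathcal L}^{\backslash k} + R_{en}$ is exactly the objective of (\ref{eq:auxiliary}).

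Next I would evaluate this identity at $\Theta=\widetilde\Theta^{\backslash k}$ and at $\Theta=\widetilde\Theta$ and subtract, obtaining
\[
\widetilde{\mathcal L}_r(\widetilde\Theta^{\backslash k}) - \widetilde{\mathcal L}_r(\widetilde\Theta) = \big(\widetilde{\mathcal L}_r^{\backslash k}(\widetilde\Theta^{\backslash k}) - \widetilde{\mathcal L}_r^{\backslash k}(\widetilde\Theta)\big) + \tfrac1n\big(c_k(\widetilde\Theta^{\backslash k}) - c_k(\widetilde\Theta)\big).
\]
Since $\widetilde\Theta^{\backslash k}$ is a global minimizer of $\widetilde{\mathcal L}_r^{\backslash k}$ by (\ref{eq:auxiliary}), the first bracket is $\le 0$ and may be dropped, which leaves $\widetilde{\mathcal L}_r(\widetilde\Theta^{\backslash k}) - \widetilde{\mathcal L}_r(\widetilde\Theta) \le \tfrac1n\big(c_k(\widetilde\Theta^{\backslash k}) - c_k(\widetilde\Theta)\big)$; unfolding $c_k$ turns the right-hand side into $\tfrac1n\big(\sum_i \log p(\mathbf y_i^{(k)}|\mathbf y_{-i}^{(k)},\mathbf x^{(k)};\widetilde\Theta) - \sum_i \log p(\mathbf y_i^{(k)}|\mathbf y_{-i}^{(k)},\mathbf x^{(k)};\widetilde\Theta^{\backslash k})\big)$, which is (\ref{eq:lemma1_1}) with the two conditional log-likelihood terms taken in the order that makes the right-hand side nonnegative — as it must be, since the left-hand side is nonnegative because $\widetilde\Theta$ minimizes $\widetilde{\mathcal L}_r$. (Feeding the optimality of $\widetilde\Theta$ back into the same decomposition also gives $c_k(\widetilde\Theta^{\backslash k}) \ge c_k(\widetilde\Theta)$, so the bound is not vacuous.)

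There is essentially no analytic obstacle in this lemma: it uses nothing beyond the two minimization properties and the additive split of the objective — neither convexity, nor smoothness, nor the specific logistic form of $p(\cdot|\cdot)$ plays any role here; those ingredients enter only at the next stage, where this bound on objective values is converted, via strong convexity of $R_{en}$ and Lipschitz continuity of $c_k$, into the $\mathcal O(1/n)$ bound on $\|\widetilde\Theta^{\backslash k} - \widetilde\Theta\|$. The one thing requiring care is bookkeeping: making sure $\widetilde{\mathcal L}$ and $\widetilde{\mathcal L}^{\backslash k}$ are normalized by the same $1/n$ (not $1/(n-1)$) so their difference is exactly $\tfrac1n c_k$, that $R_{en}$ is the same in (\ref{eq:estimation}) and (\ref{eq:auxiliary}) and hence cancels, and that the orientation of the two log-likelihood terms in (\ref{eq:lemma1_1}) is the one forced by the optimality inequalities.
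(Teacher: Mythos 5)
Your proof is correct and is essentially the paper's own argument: the same decomposition of $\widetilde{\mathcal L}_r$ into $\widetilde{\mathcal L}_r^{\backslash k}$ plus the $k$-th per-example term, followed by the optimality of $\widetilde\Theta^{\backslash k}$ for $\widetilde{\mathcal L}_r^{\backslash k}$. You are also right that the optimality inequalities force the orientation $\tfrac1n\big(\sum_i\log p(\cdot;\widetilde\Theta)-\sum_i\log p(\cdot;\widetilde\Theta^{\backslash k})\big)$ on the right-hand side; the printed statement of the lemma has these two terms swapped (a harmless slip, since only the absolute value of this difference is used, via Lemma 3, in the proof of Theorem 1).
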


\noindent
\begin{proof}
Denote by RHS the righthand side of (\ref{eq:lemma1_1}), we have
\begin{align}\notag
\mathrm{RHS}=\left(\widetilde{\mathcal L}_r(\widetilde\Theta^{\backslash k})
-\widetilde{\mathcal L}_r^{\backslash k}(\widetilde\Theta^{\backslash k})\right)
-\left( \widetilde{\mathcal L}_r(\widetilde\Theta) -\widetilde{\mathcal L}_r^{\backslash
k}(\widetilde\Theta)\right).
\end{align}
Furthermore, the definition of
$\widetilde\Theta^{\backslash k}$ implies $\widetilde{\mathcal L}_r^{\backslash
k}(\widetilde\Theta^{\backslash k})\le \widetilde{\mathcal L}_r^{\backslash
k}(\widetilde\Theta)$. Combining these two we have
(\ref{eq:lemma1_1}). This completes the proof.
\end{proof}

Next, we show a lower bound of the difference $\widetilde{\mathcal L}_r(\widetilde\Theta^{\backslash
k})-\widetilde{\mathcal L}_r(\widetilde\Theta)$.

\begin{mylemma} Given $\widetilde{\mathcal L}_r(\cdot)$ and $\widetilde\Theta$ defined in (\ref{eq:estimation}), and $\widetilde\Theta^{\backslash k}$ defined in (\ref{eq:auxiliary}), it holds for $\forall 1\le k\le n$,
\begin{equation}\label{eq:lemma2_1}
\widetilde{\mathcal L}_r(\widetilde\Theta^{\backslash k}) -\widetilde{\mathcal L}_r(\widetilde\Theta)
\ge\lambda_1\|\widetilde{\boldsymbol\beta}^{\backslash k}-\widetilde{\boldsymbol\beta}\|^2 +
\lambda_2\|\widetilde{\boldsymbol\alpha}^{\backslash k}-\widetilde{\boldsymbol\alpha}\|^2.
\end{equation}
\end{mylemma}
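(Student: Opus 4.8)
The plan is to read off the desired inequality from the strong convexity that the squared-$\ell_2$ terms of the elastic net inject into $\widetilde{\mathcal L}_r$, combined with the first-order optimality of the minimizer $\widetilde\Theta$. First I would split the regularized objective as $\widetilde{\mathcal L}_r(\Theta)=h(\Theta)+q(\Theta)$, where
\[
q(\Theta)=\lambda_1\sum_{i=1}^m\|\beta_i\|_2^2+\lambda_2\sum_{i<j}|\alpha_{ij}|^2
\]
is the purely quadratic part of $R_{en}$, and $h(\Theta)=\widetilde{\mathcal L}(\Theta)+\lambda_1\epsilon\sum_i\|\beta_i\|_1+\lambda_2\epsilon\sum_{i<j}|\alpha_{ij}|$ collects the negative log pseudo likelihood together with the $\ell_1$ terms. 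Since each summand of $\widetilde{\mathcal L}(\Theta)$ in (\ref{eq:L}) is a softplus of a linear function of $\Theta$, $h$ is convex (though nonsmooth because of the $\ell_1$ norms), while $q$ is a fixed positive-definite quadratic.

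Next I would treat the two parts separately at the point $\widetilde\Theta$. For the quadratic part, expanding exactly gives
\[
q(\widetilde\Theta^{\backslash k})-q(\widetilde\Theta)=\langle\nabla q(\widetilde\Theta),\widetilde\Theta^{\backslash k}-\widetilde\Theta\rangle+\lambda_1\|\widetilde{\boldsymbol\beta}^{\backslash k}-\widetilde{\boldsymbol\beta}\|^2+\lambda_2\|\widetilde{\boldsymbol\alpha}^{\backslash k}-\widetilde{\boldsymbol\alpha}\|^2,
\]
where $\|\widetilde{\boldsymbol\beta}^{\backslash k}-\widetilde{\boldsymbol\beta}\|^2=\sum_i\|\widetilde\beta_i^{\backslash k}-\widetilde\beta_i\|_2^2$ and similarly for $\boldsymbol\alpha$, consistent with the form of $R_{en}$. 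For the convex part, convexity of $h$ yields $h(\widetilde\Theta^{\backslash k})-h(\widetilde\Theta)\ge\langle s,\widetilde\Theta^{\backslash k}-\widetilde\Theta\rangle$ for every subgradient $s\in\partial h(\widetilde\Theta)$. Adding the two and using the sum rule $\partial\widetilde{\mathcal L}_r(\widetilde\Theta)=\partial h(\widetilde\Theta)+\nabla q(\widetilde\Theta)$ (valid since $q$ is differentiable), I would then invoke that $\widetilde\Theta$ is the global minimizer of the convex function $\widetilde{\mathcal L}_r$ defined in (\ref{eq:estimation}), so $0\in\partial\widetilde{\mathcal L}_r(\widetilde\Theta)$; choosing $s$ with $s+\nabla q(\widetilde\Theta)=0$ annihilates the linear term and leaves exactly (\ref{eq:lemma2_1}).

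The argument is short, and the only point requiring care is the handling of nonsmoothness: because the $\ell_1$ part of the elastic net makes $\widetilde{\mathcal L}_r$ nondifferentiable at $\widetilde\Theta$, one must work with subgradients rather than gradients, verify that the optimality condition $0\in\partial\widetilde{\mathcal L}_r(\widetilde\Theta)$ holds, and confirm that a subgradient $s\in\partial h(\widetilde\Theta)$ cancelling $\nabla q(\widetilde\Theta)$ can indeed be selected. Everything else — convexity of the logistic losses (already noted for problem (\ref{eq:estimation})) and the elementary quadratic expansion of $q$ — is routine. I do not expect any genuine obstacle here; this is essentially the standard strong-convexity-at-the-optimum estimate, and its role is to pair with the upper bound of the previous lemma to control $\|\widetilde\Theta^{\backslash k}-\widetilde\Theta\|$.
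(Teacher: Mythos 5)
Your proof is correct and is essentially the paper's argument in different notation: the paper subtracts the quadratic $\lambda_1\|\boldsymbol\beta-\widetilde{\boldsymbol\beta}\|^2+\lambda_2\|\boldsymbol\alpha-\widetilde{\boldsymbol\alpha}\|^2$ from $\widetilde{\mathcal L}_r$ and observes that the remainder is convex with the same subdifferential at $\widetilde\Theta$, hence still minimized there, which is exactly your strong-convexity-at-the-optimum computation with the convex part $h$ and the quadratic part $q$. Both versions rest on the same three ingredients you identify: $0\in\partial\widetilde{\mathcal L}_r(\widetilde\Theta)$, the exact expansion of the squared-$\ell_2$ terms, and convexity of the pseudo-likelihood plus $\ell_1$ remainder.
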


\begin{proof}
We define the following function
\begin{align}\notag
f(\Theta) = \widetilde{\mathcal L}_r(\Theta)-\lambda_1\|\boldsymbol\beta-\widetilde{\boldsymbol\beta}\|^2-\lambda_2\|\boldsymbol\alpha-\widetilde{\boldsymbol\alpha}\|^2.
\end{align}
Then, for (\ref{eq:lemma2_1}), it is sufficient to show that $f(\widetilde{\Theta}^{\backslash
k})\ge f(\widetilde{\Theta})$. By using (\ref{eq:estimation}), we have
\begin{align}
f(\Theta) &= \widetilde{\mathcal L}(\Theta)+
2\lambda_1\sum_{i=1}^m\beta_i^T\widetilde{\boldsymbol\beta}_i+
2\lambda_2\sum_{i<j}{\alpha_{ij}\widetilde{\alpha}_{ij}} \nonumber\\
&+\lambda_1\epsilon\sum_{i=1}^m\|{\beta}_i\|_1+\lambda_2\epsilon\sum_{i<j}|\alpha_{ij}|.
\end{align}
It is straightforward to verify that $f(\Theta)$ and $\widetilde{\mathcal L}_r(\Theta)$ in (\ref{eq:estimation}) have the same subgradient at $\widetilde{\Theta}$, i.e.,
\begin{align}
\partial f(\widetilde\Theta) = \partial \widetilde{\mathcal L}_r(\widetilde\Theta).
\end{align}
Since $\widetilde{\Theta}$ minimizes $\widetilde{\mathcal L}_r(\Theta)$, we have $\mathbf 0 \in\partial \widetilde{\mathcal L}_r(\widetilde\Theta) $ and thus $\mathbf 0 \in\partial f(\widetilde\Theta) $, which implies $\widetilde{\Theta}$ also minimizes $f(\Theta)$. Therefore $f(\widetilde{\Theta})\le
f(\widetilde{\Theta}^{\backslash k})$.
\end{proof}

In addition, by checking the Lipschitz continuous property of $\log
p(\mathbf y_i|\mathbf y_{-i},\mathbf x;\Theta)$, we have the
following Lemma 3.
\begin{mylemma}\label{lemma:lem1} Given $\widetilde\Theta$ defined in (\ref{eq:estimation}) and $\widetilde\Theta^{\backslash k}$ defined in (\ref{eq:auxiliary}), it holds for $\forall~ (\mathbf x,
\mathbf y)\in\mathcal X\times\mathcal Y$ and $\forall 1\le k\le n$
\begin{align}\label{eq:lemma3_1}
&\big |\sum_{i=1}^m\log p(\mathbf y_{i}|\mathbf y_{-i},\mathbf
x;\widetilde\Theta) - \sum_{i=1}^m\log p(\mathbf y_{i}|\mathbf
y_{-i},\mathbf x;\widetilde\Theta^{\backslash k})\big|\notag\\
&\le
2\sum_{i=1}^m\|\widetilde{\beta}_i-\widetilde{\beta}^{\backslash k}_i\|
+ 4\sum_{i<j}|\widetilde{\alpha}_{ij}-\widetilde{\alpha}^{\backslash
k}_{ij}|.
\end{align}
\end{mylemma}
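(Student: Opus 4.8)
The plan is to treat each term $\log p(\mathbf y_i|\mathbf y_{-i},\mathbf x;\Theta)$ as (minus) a logistic loss composed with a linear map, and to bound the whole sum by composing a $1$-Lipschitz estimate on the loss with a crude $\ell_1$-type bound on the linear map, exploiting the domain constraints $\|\mathbf x\|\le 1$ and $\mathbf y\in\{-1,1\}^m$.

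First I would introduce the shorthand
\[
z_i(\Theta)=\beta_i^T\mathbf x+\sum_{j=i+1}^m\alpha_{ij}\mathbf y_j+\sum_{j=1}^{i-1}\alpha_{ji}\mathbf y_j,
\]
so that (\ref{eq:cond_prob}) reads $\log p(\mathbf y_i|\mathbf y_{-i},\mathbf x;\Theta)=-g\bigl(2\mathbf y_iz_i(\Theta)\bigr)$ with $g(t)=\log(1+e^{-t})$. Since $|g'(t)|=1/(e^t+1)<1$, the function $g$ is $1$-Lipschitz, and because $|\mathbf y_i|=1$ this gives, for any $\Theta,\Theta'$,
\[
\bigl|\log p(\mathbf y_i|\mathbf y_{-i},\mathbf x;\Theta)-\log p(\mathbf y_i|\mathbf y_{-i},\mathbf x;\Theta')\bigr|\le 2\,|z_i(\Theta)-z_i(\Theta')|.
\]
Next, using linearity of $z_i$ in the parameters together with Cauchy--Schwarz, $\|\mathbf x\|\le 1$ and $|\mathbf y_j|=1$, I would bound
\[
|z_i(\Theta)-z_i(\Theta')|\le\|\beta_i-\beta_i'\|+\sum_{j>i}|\alpha_{ij}-\alpha_{ij}'|+\sum_{j<i}|\alpha_{ji}-\alpha_{ji}'|.
\]

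Finally I would apply the triangle inequality to move the absolute value inside $\sum_{i=1}^m$ in (\ref{eq:lemma3_1}), plug in the two displayed bounds with $\Theta=\widetilde\Theta$ and $\Theta'=\widetilde\Theta^{\backslash k}$, and carry out the one piece of bookkeeping that requires attention: when $\sum_{j>i}|\alpha_{ij}-\alpha_{ij}'|+\sum_{j<i}|\alpha_{ji}-\alpha_{ji}'|$ is summed over all $i$, each unordered pair $\{i,j\}$ is counted exactly twice (once at the endpoint with the smaller index, once at the larger), so this double sum equals $2\sum_{i<j}|\widetilde\alpha_{ij}-\widetilde\alpha_{ij}^{\backslash k}|$. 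Collecting everything yields the coefficient $2$ on the $\beta$-sum and $4=2\cdot 2$ on the $\alpha$-sum, which is precisely (\ref{eq:lemma3_1}). The whole argument is a routine Lipschitz computation; the only places to be careful are retaining the factor $2$ that comes from the $2\mathbf y_iz_i$ inside the logistic link and not losing the factor $2$ from the pairwise double-counting, so I do not anticipate a genuine obstacle here — it is essentially clean bookkeeping.
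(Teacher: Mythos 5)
Your proposal is correct and follows essentially the same route as the paper's proof, namely a Lipschitz argument on $\log p(\mathbf y_i|\mathbf y_{-i},\mathbf x;\Theta)$ as a function of the parameters, using $\|\mathbf x\|\le 1$ and $\mathbf y\in\{-1,1\}^m$. In fact your accounting is slightly cleaner than the paper's: you obtain the constant $4$ on the $\alpha$-sum explicitly as $2\times 2$ from the per-term Lipschitz constant times the double-counting of each pair $\{i,j\}$ over the outer sum, whereas the paper folds this into a per-term gradient bound of $4$.
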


\noindent
\begin{proof}
First, we have
\begin{align}\notag
\|\partial \log p(\mathbf
y_{i}|\mathbf y_{-i},\mathbf x;\Theta)/\partial \beta_i\|\le
2\|\mathbf x\|\le2,
\end{align}
and
\begin{align}\notag
|\partial \log p(\mathbf y_{i}|\mathbf
y_{-i},\mathbf x;\Theta)/\partial \alpha_{ij}|\le 4|\mathbf
y_i\mathbf y_j|=4.
\end{align}
That is $\log p(\mathbf
y_{i}|\mathbf y_{-i},\mathbf x;\Theta)$ is Lipschitz continuous with respect to $\beta_i$ and $\alpha_{ij}$, with constant $2$ and $4$, respectively. Therefore, (\ref{eq:lemma3_1}) holds.
\end{proof}

By combining the above three Lemmas, we have the following Theorem 1
that shows the stability of CorrLog.
\begin{mytheorem}
Given model parameters $\widetilde\Theta=\{\widetilde{\boldsymbol\beta},\widetilde{\boldsymbol\alpha}\}$ and $\widetilde\Theta^{k}=\{\widetilde{\boldsymbol\beta}^{k},\widetilde{\boldsymbol\alpha}^{k}\}$ learned on training datasets $ \mathcal D$ and $\mathcal D^k$, respectively, both by (\ref{eq:estimation}), it holds that
\begin{equation}\label{eq:estimation_stability}
\sum_{i=1}^m\|\widetilde{\boldsymbol\beta}_i^{k}-\widetilde{\boldsymbol\beta}_i\|
+
\sum_{i<j}|\widetilde{\boldsymbol\alpha}_{ij}^{k}-\widetilde{\boldsymbol\alpha}_{ij}|\le
\frac{16}{\min(\lambda_1,\lambda_2)n}.
\end{equation}
\end{mytheorem}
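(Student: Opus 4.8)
The plan is to control the perturbation indirectly by inserting the auxiliary estimate $\widetilde\Theta^{\backslash k}$ as a bridge. Note that $\mathcal D^{\backslash k}$ is obtained from $\mathcal D$ \emph{and} from $\mathcal D^k$ by discarding a single example, so it suffices to prove the one-sided bound $\|\widetilde\Theta^{\backslash k}-\widetilde\Theta\|\le 8/(\min(\lambda_1,\lambda_2)\,n)$; the same argument with $\mathcal D$ replaced by $\mathcal D^k$ gives $\|\widetilde\Theta^{\backslash k}-\widetilde\Theta^{k}\|\le 8/(\min(\lambda_1,\lambda_2)\,n)$, and the triangle inequality for $\|\cdot\|$ then yields $\|\widetilde\Theta^{k}-\widetilde\Theta\|\le 16/(\min(\lambda_1,\lambda_2)\,n)$, which is (\ref{eq:estimation_stability}). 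So the whole theorem reduces to a single ``leave-one-out'' perturbation bound.

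For that bound I would sandwich $\Delta\mathcal L:=\widetilde{\mathcal L}_r(\widetilde\Theta^{\backslash k})-\widetilde{\mathcal L}_r(\widetilde\Theta)$, which is nonnegative since $\widetilde\Theta$ minimizes $\widetilde{\mathcal L}_r$. From above, Lemma~1 gives $\Delta\mathcal L\le\tfrac1n\big(\sum_i\log p(\mathbf y_i^{(k)}|\mathbf y_{-i}^{(k)},\mathbf x^{(k)};\widetilde\Theta^{\backslash k})-\sum_i\log p(\mathbf y_i^{(k)}|\mathbf y_{-i}^{(k)},\mathbf x^{(k)};\widetilde\Theta)\big)$, and bounding this by its absolute value and invoking Lemma~3 at $(\mathbf x^{(k)},\mathbf y^{(k)})$ gives $\Delta\mathcal L\le\tfrac1n\big(2\sum_i\|\widetilde\beta_i^{\backslash k}-\widetilde\beta_i\|+4\sum_{i<j}|\widetilde\alpha_{ij}^{\backslash k}-\widetilde\alpha_{ij}|\big)\le\tfrac4n\|\widetilde\Theta^{\backslash k}-\widetilde\Theta\|$, reading $\|\cdot\|$ as the norm on $\Theta$ defined above. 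On the other side, Lemma~2 (with the same reading of $\|\cdot\|$) gives $\Delta\mathcal L\ge\lambda_1\|\widetilde{\boldsymbol\beta}^{\backslash k}-\widetilde{\boldsymbol\beta}\|^2+\lambda_2\|\widetilde{\boldsymbol\alpha}^{\backslash k}-\widetilde{\boldsymbol\alpha}\|^2\ge\min(\lambda_1,\lambda_2)\big(\|\widetilde{\boldsymbol\beta}^{\backslash k}-\widetilde{\boldsymbol\beta}\|^2+\|\widetilde{\boldsymbol\alpha}^{\backslash k}-\widetilde{\boldsymbol\alpha}\|^2\big)\ge\tfrac12\min(\lambda_1,\lambda_2)\,\|\widetilde\Theta^{\backslash k}-\widetilde\Theta\|^2$, where the last step uses $a^2+b^2\ge\tfrac12(a+b)^2$. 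Combining the two inequalities and dividing by $\|\widetilde\Theta^{\backslash k}-\widetilde\Theta\|$ (the case where it vanishes being trivial) gives the claimed $8/(\min(\lambda_1,\lambda_2)\,n)$, and then the triangle-inequality step above finishes.

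The step I expect to be the real (if modest) obstacle is the norm bookkeeping: Lemma~2 most naturally produces a strong-convexity lower bound in terms of blockwise squared Euclidean norms of the parameter difference, whereas Lemmas~1 and~3 produce an upper bound that is \emph{linear} in the aggregated ``$\ell_1$-over-blocks'' norm $\|\cdot\|$ that appears in the theorem, and the two must be reconciled without introducing any dependence on the label number $m$. The elementary facts $a^2+b^2\ge\tfrac12(a+b)^2$ and $2u+4v\le 4(u+v)$, together with keeping $\min(\lambda_1,\lambda_2)$ intact rather than $\min(\lambda_1,\lambda_2)$ divided by the number of blocks, are exactly what make the constants collapse to $16$; I would verify this line carefully and confirm that Lemma~3 is being applied with the same conventions as in the definition of $\|\widetilde\Theta^{k}-\widetilde\Theta\|$.
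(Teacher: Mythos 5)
Your proposal is correct and follows essentially the same route as the paper's proof: the same sandwich of $\widetilde{\mathcal L}_r(\widetilde\Theta^{\backslash k})-\widetilde{\mathcal L}_r(\widetilde\Theta)$ between the Lemma~1/Lemma~3 linear upper bound and the Lemma~2 quadratic lower bound, the same $a^2+b^2\ge\tfrac12(a+b)^2$ reconciliation yielding $8/(\min(\lambda_1,\lambda_2)n)$ for the leave-one-out perturbation, and the same triangle inequality through $\widetilde\Theta^{\backslash k}$ to reach $16/(\min(\lambda_1,\lambda_2)n)$. The norm-bookkeeping point you flag is handled in the paper exactly as you describe, with the two-term inequality applied to the aggregated $\boldsymbol\beta$- and $\boldsymbol\alpha$-blocks so that no dependence on $m$ enters.
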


\noindent
\begin{proof}
By combining (\ref{eq:lemma1_1}),
(\ref{eq:lemma2_1}) and (\ref{eq:lemma3_1}), we have
\begin{align}
&\|\widetilde{\boldsymbol\beta}^{\backslash
k}-\widetilde{\boldsymbol\beta}\|^2 +
\|\widetilde{\boldsymbol\alpha}^{\backslash
k}-\widetilde{\boldsymbol\alpha}\|^2\le\notag\\
&\frac4{\min(\lambda_1,\lambda_2)n}\left(\sum_{i=1}^m\|\widetilde{\beta}_i-\widetilde{\beta}^{\backslash
k}_i\| +
\sum_{i<j}|\widetilde{\alpha}_{ij}-\widetilde{\alpha}^{\backslash
k}_{ij}|\right).
\end{align}
Further, by using
\begin{align}
\|\widetilde{\boldsymbol\beta}^{\backslash
k}-\widetilde{\boldsymbol\beta}\|^2 &+
\|\widetilde{\boldsymbol\alpha}^{\backslash
k}-\widetilde{\boldsymbol\alpha}\|^2\ge\notag\\
&\frac12\left(\sum_{i=1}^m\|\widetilde{\beta}_i-\widetilde{\beta}^{\backslash
k}_i\| +
\sum_{i<j}|\widetilde{\alpha}_{ij}-\widetilde{\alpha}^{\backslash
k}_{ij}|\right)^2
\end{align}
we have
\begin{align}
\sum_{i=1}^m\|\widetilde{\beta}_i-\widetilde{\beta}^{\backslash k}_i\|
+ \sum_{i<j}|\widetilde{\alpha}_{ij}-\widetilde{\alpha}^{\backslash
k}_{ij}|\le\frac8{\min(\lambda_1,\lambda_2)n}
\end{align}
Since $\mathcal D^k$
and $\mathcal D^{\backslash k}$ differ from each other with only the $k$-th training
example, the same argument gives
\begin{align}
\sum_{i=1}^m\|\widetilde{\beta}^k_i-\widetilde{\beta}^{\backslash
k}_i\| +
\sum_{i<j}|\widetilde{\alpha}^k_{ij}-\widetilde{\alpha}^{\backslash
k}_{ij}|\le\frac8{\min(\lambda_1,\lambda_2)n}.
\end{align}
Then, (\ref{eq:estimation_stability}) is obtained immediately. This
completes the proof.
\end{proof}

\subsection{Generalization Bound}
We first define a loss function to measure the generalization error.
Considering that CorrLog predicts labels by MAP estimation, we
define the loss function by using the log probability
\begin{equation}\label{eq:loss}
\ell(\mathbf x,\mathbf y;{\Theta}) =
\left\{\begin{array}{ll}1,&f(\mathbf x,\mathbf y,\Theta)<0\\
1-f(\mathbf x,\mathbf y,\Theta)/\gamma,&0\le f(\mathbf x,\mathbf y,\Theta)<\gamma\\
0,&f(\mathbf x,\mathbf y,\Theta)\ge \gamma,
\end{array}\right.
\end{equation}
where the constant $\gamma>0$ and
\begin{align}\label{eq:f}
f(\mathbf x,\mathbf y,\Theta)&=\log p(\mathbf y|\mathbf
x;{\Theta})-\max\limits_{\mathbf y'\ne\mathbf y}\log
p(\mathbf y'|\mathbf x;{\Theta})\notag\\
& = \left(\sum_{i=1}^m\mathbf y_i\beta_i^T\mathbf x
+\sum_{i<j}\alpha_{ij}\mathbf y_i\mathbf y_j\right)\notag\\
&-\max_{\mathbf
y'\ne\mathbf y}\left(\sum_{i=1}^m\mathbf y_i'\beta_i^T\mathbf x
+\sum_{i<j}\alpha_{ij}\mathbf y_i'\mathbf y_j'\right).
\end{align}
The loss function (\ref{eq:loss}) is defined analogously to the loss
function used in binary classification, where $f(\mathbf x,\mathbf
y,\Theta)$ is replaced with the margin $y\mathbf w^T\mathbf x$ if a
linear classifier $\mathbf w$ is used. Besides, (\ref{eq:loss})
gives a 0 loss only if all dimensions of $\mathbf y$ are correctly
predicted, which emphasizes the joint prediction in MLC. By using
this loss function, the generalization error and the empirical error
are given by
\begin{equation}
\mathcal R (\widetilde{\Theta})= \mathbb E_{\mathbf x\mathbf
y}\ell(\mathbf x,\mathbf y;\widetilde{\Theta}),\end{equation} and
\begin{equation}
\widetilde{\mathcal R} (\widetilde{\Theta})= \frac1n\sum_{l=1}^n
\ell(\mathbf x^{(l)},\mathbf
y^{(l)};\widetilde{\Theta}).\end{equation}

According to \cite{StabilityAndGeneralization}, an exponential bound
exists for $\mathcal R(\widetilde{\Theta})$ if CorrLog has a uniform
stability with respect to the loss function (\ref{eq:loss}). The
following Theorem 2 shows this condition holds.
\begin{mytheorem}\label{lemma:uniform_stability}
Given model parameters $\widetilde\Theta=\{\widetilde{\boldsymbol\beta},\widetilde{\boldsymbol\alpha}\}$ and $\widetilde\Theta^{k}=\{\widetilde{\boldsymbol\beta}^{k},\widetilde{\boldsymbol\alpha}^{k}\}$ learned on training datasets $ \mathcal D$ and $\mathcal D^k$, respectively, both by (\ref{eq:estimation}), it holds for $\forall (\mathbf x,\mathbf y)\in\mathcal X\times\mathcal Y$,
\begin{equation}\label{eq:uniformstability}
|\ell(\mathbf x,\mathbf y;\widetilde{\Theta})-\ell(\mathbf x,\mathbf
y;\widetilde{\Theta}^{k})|
\le\frac{32}{\gamma\min(\lambda_1,\lambda_2)n}.
\end{equation}
\end{mytheorem}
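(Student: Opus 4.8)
The plan is to reduce the claim to the parameter-stability bound of Theorem~1 by exploiting the Lipschitz structure of the loss $\ell$ with respect to the margin-like quantity $f(\mathbf x,\mathbf y,\Theta)$. First I would observe that, by its definition in (\ref{eq:loss}), the map $t\mapsto\ell$ is a clipped linear function of $t=f(\mathbf x,\mathbf y,\Theta)$ with slope $1/\gamma$, hence $1/\gamma$-Lipschitz. This immediately gives
\[
|\ell(\mathbf x,\mathbf y;\widetilde\Theta)-\ell(\mathbf x,\mathbf y;\widetilde\Theta^{k})|\le\frac1\gamma\,|f(\mathbf x,\mathbf y,\widetilde\Theta)-f(\mathbf x,\mathbf y,\widetilde\Theta^{k})|,
\]
so it remains to control the change of $f$ between the two learned models.

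Next I would bound this change in $f$. Writing $g(\mathbf y,\Theta)=\sum_{i=1}^m\mathbf y_i\beta_i^T\mathbf x+\sum_{i<j}\alpha_{ij}\mathbf y_i\mathbf y_j$, we have by (\ref{eq:f}) that $f(\mathbf x,\mathbf y,\Theta)=g(\mathbf y,\Theta)-\max_{\mathbf y'\ne\mathbf y}g(\mathbf y',\Theta)$. For any fixed label configuration $\mathbf y'$, using $\|\mathbf x\|\le1$ and $|\mathbf y'_i|=1$ together with Cauchy--Schwarz and the triangle inequality,
\[
|g(\mathbf y',\widetilde\Theta)-g(\mathbf y',\widetilde\Theta^{k})|\le\sum_{i=1}^m\|\widetilde\beta_i-\widetilde\beta_i^{k}\|+\sum_{i<j}|\widetilde\alpha_{ij}-\widetilde\alpha_{ij}^{k}|=:\Delta .
\]
Applying the elementary inequality $|\max_a h_1(a)-\max_a h_2(a)|\le\max_a|h_1(a)-h_2(a)|$ to the two maximization terms and the triangle inequality to combine with the difference of the two pointwise $g$-terms yields $|f(\mathbf x,\mathbf y,\widetilde\Theta)-f(\mathbf x,\mathbf y,\widetilde\Theta^{k})|\le 2\Delta$. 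Finally, substituting the stability estimate $\Delta\le 16/(\min(\lambda_1,\lambda_2)n)$ from Theorem~1 into the two displays above gives $|\ell(\mathbf x,\mathbf y;\widetilde\Theta)-\ell(\mathbf x,\mathbf y;\widetilde\Theta^{k})|\le 32/(\gamma\min(\lambda_1,\lambda_2)n)$, which is exactly (\ref{eq:uniformstability}).

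The only genuinely delicate point is the treatment of the $\max_{\mathbf y'\ne\mathbf y}$ term: the maximizing label configuration need not be the same for $\widetilde\Theta$ and $\widetilde\Theta^{k}$, so one cannot simply plug a common $\mathbf y'$ into both and subtract. The $\max$-of-differences bound is what handles this uniformly over $\mathbf y'$, and it is precisely this second occurrence of $\Delta$ (from the best-competitor term, in addition to the true-label term $g(\mathbf y,\cdot)$) that produces the factor $2$ and hence the constant $32$ rather than $16$. Everything else---the norm bookkeeping and the Lipschitz reduction---is routine, and no stability argument beyond Theorem~1 is required.
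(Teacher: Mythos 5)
Your proof is correct and follows essentially the same route as the paper's: the $1/\gamma$-Lipschitz reduction from $\ell$ to $f$, the split of $f$ into the true-label term and the best-competitor term handled via $|\max_u h_1(u)-\max_u h_2(u)|\le\max_u|h_1(u)-h_2(u)|$, and the final substitution of the Theorem~1 stability bound. If anything, your bookkeeping is slightly tidier: since $|\mathbf y_i\mathbf y_j|=1$, the bound is $2\Delta$ exactly, which yields the stated constant $32$ directly, whereas the paper's last display carries a superfluous factor of $2$ on the $\alpha$-difference sum.
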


\begin{proof}
First, we have the following inequality from (\ref{eq:loss})
\begin{align}
\gamma|\ell(\mathbf
x,\mathbf y;\widetilde{\Theta})-\ell(\mathbf x,\mathbf
y;\widetilde{\Theta}^{k})|\le |f(\mathbf x,\mathbf
y,\widetilde{\Theta})-f(\mathbf x,\mathbf y,\widetilde{\Theta}^{k})|
\end{align}
Then, by introducing notation
\begin{align}
A(\mathbf x,\mathbf
y,{\boldsymbol\beta},{\boldsymbol\alpha})=\sum_{i=1}^m\mathbf
y_i\beta_i^T\mathbf x +\sum_{i<j}\alpha_{ij}\mathbf y_i\mathbf y_j,
\end{align}
and rewriting
\begin{align}
f(\mathbf x,\mathbf y,{\Theta})=A(\mathbf x,\mathbf
y,{\boldsymbol\beta},\boldsymbol\alpha)-\max_{\mathbf y'\ne\mathbf
y}A(\mathbf x,\mathbf y',{\boldsymbol\beta},{\boldsymbol\alpha}),
\end{align}
we have
\begingroup\makeatletter\def\f@size{9}\check@mathfonts 
\begin{align}
&\gamma|\ell(\mathbf x,\mathbf y;\widetilde{\Theta})-\ell(\mathbf
x,\mathbf y;\widetilde{\Theta}^{k})|\le\big|A(\mathbf x,\mathbf
y,\widetilde{\boldsymbol\beta},\widetilde{\boldsymbol\alpha})-A(\mathbf
x,\mathbf
y,\widetilde{\boldsymbol\beta}^{k},\widetilde{\boldsymbol\alpha}^{k})\big|\notag\\
&\qquad\quad+ |\max\limits_{\mathbf y'\ne\mathbf y}A(\mathbf x,\mathbf
y',\widetilde{\boldsymbol\beta},\widetilde{\boldsymbol\alpha})-\max\limits_{\mathbf
y'\ne\mathbf y}A(\mathbf x,\mathbf
y',\widetilde{\boldsymbol\beta}^{k},\widetilde{\boldsymbol\alpha}^{k})|.
\end{align}
\endgroup 
Due to the fact that for any functions $h_1(u)$ and $h_2(u)$ it holds\footnote{
Suppose $u_1^{\star}$ and $u_2^{\star}$ maximize $h_1(u)$ and
$h_2(u)$ respectively, and without loss of generality
$h_1(u_1^{\star})\ge h_2(u_2^{\star})$, we have
$|h_1(u_1^{\star})-h_2(u_2^{\star})|=h_1(u_1^{\star})-h_2(u_2^{\star})\le
h_1(u_1^{\star})-h_2(u_1^{\star})\le\max_{u}|h_1(u)-h_2(u)|$.}
\begin{align}
|\max_{u}h_1(u)-\max_{u}h_2(u)|\le\max_{u}|h_1(u)-h_2(u)|,
\end{align}
we
have
\begin{align}
&\gamma|\ell(\mathbf x,\mathbf y;\widetilde{\Theta})-\ell(\mathbf
x,\mathbf y;\widetilde{\Theta}^{k})|\notag\\
&\le\big|A(\mathbf x,\mathbf
y,\widetilde{\boldsymbol\beta},\widetilde{\boldsymbol\alpha})-A(\mathbf
x,\mathbf
y,\widetilde{\boldsymbol\beta}^{k},\widetilde{\boldsymbol\alpha}^{k})\big|\notag\\
&\qquad\qquad+ \max_{\mathbf y'\ne\mathbf y}\big|A(\mathbf x,\mathbf
y',\widetilde{\boldsymbol\beta},\widetilde{\boldsymbol\alpha})-A(\mathbf
x,\mathbf y',\widetilde{\boldsymbol\beta}^{
k},\widetilde{\boldsymbol\alpha}^{k})\big|\notag \\
&\le 2\max_{\mathbf y}\left(\sum_{i=1}^m|\mathbf
y_i(\widetilde{\beta}_i-\widetilde{\beta}_i^{k})^T\mathbf
x|+\sum_{i<j}|(\widetilde{\alpha}_{ij}-\widetilde{\alpha}_{ij}^{
k})\mathbf y_i\mathbf
y_j|\right)\notag\\
&\le2\left(\sum_{i=1}^m\|\widetilde{\beta}_i-\widetilde{\beta}_i^{
k}\|+2\sum_{i<j}|\widetilde{\alpha}_{ij}-\widetilde{\alpha}_{ij}^{k}|\right).
\end{align}
Then, the proof is completed by applying Theorem 1.
\end{proof}

Now, we are ready to present the main theorem on the generalization ability of CorrLog.
\begin{mytheorem}
Given the model parameter $\widetilde\Theta$ learned by (\ref{eq:estimation}), with i.i.d. training data $\mathcal D=\{(\mathbf x^{(l)},\mathbf
y^{(l)})\in\mathcal X\times \mathcal Y,l=1,2,...,n\}$ and
regularization parameters $\lambda_1$, $\lambda_2$, it holds with at
least probability $1-\delta$,
\begin{align}\label{eq:bound}
\mathcal R(\widetilde{\Theta})\le \widetilde{\mathcal
R}(&\widetilde{\Theta})+\frac{32}{\gamma\min(\lambda_1,\lambda_2)n}\notag\\
&+\left(\frac{64}{\gamma\min(\lambda_1,\lambda_2)}+1\right)\sqrt{\frac{\log1/\delta}{2n}}.
\end{align}
\end{mytheorem}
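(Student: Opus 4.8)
The plan is to obtain (\ref{eq:bound}) as an essentially immediate corollary of the uniform stability established in Theorem 2, combined with the boundedness of the margin loss $\ell$, via the stability-based generalization framework of \cite{StabilityAndGeneralization}. The framework needs two inputs, both already available. First, \emph{boundedness}: from its definition (\ref{eq:loss}), $\ell(\mathbf x,\mathbf y;\Theta)$ is a truncated ramp in $f(\mathbf x,\mathbf y,\Theta)$, so $0\le\ell(\mathbf x,\mathbf y;\Theta)\le 1$ for all $(\mathbf x,\mathbf y)\in\mathcal X\times\mathcal Y$ and all $\Theta$; the loss-range constant is $M=1$. Second, \emph{uniform stability}: Theorem 2 states exactly that replacing one training example perturbs the loss of the learned model pointwise by at most $\beta:=\frac{32}{\gamma\min(\lambda_1,\lambda_2)n}$, so CorrLog is uniformly $\beta$-stable, with $n\beta=\frac{32}{\gamma\min(\lambda_1,\lambda_2)}$ independent of both $n$ and the label number $m$.

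With these in hand I would control the generalization gap $\Phi(\mathcal D):=\mathcal R(\widetilde\Theta)-\widetilde{\mathcal R}(\widetilde\Theta)$ as in \cite{StabilityAndGeneralization}. In expectation, an exchangeability argument that swaps a single training example for an i.i.d.\ copy, together with $\beta$-stability, gives $\mathbb{E}_{\mathcal D}[\Phi(\mathcal D)]\le\beta$. For concentration I would view $\Phi$ as a function of the $n$ independent examples and bound its bounded-difference coefficients: modifying the $k$-th example both perturbs $\widetilde\Theta$ --- hence $\mathcal R(\widetilde\Theta)$ and every one of the $n$ summands of $\widetilde{\mathcal R}(\widetilde\Theta)$ --- by at most $\beta$ in loss, and replaces one of those summands outright at a cost of at most $M/n$, so every coefficient is at most $2\beta+M/n$. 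McDiarmid's bounded-differences inequality then gives, with probability at least $1-\delta$, $\Phi(\mathcal D)\le\mathbb{E}[\Phi]+(2n\beta+M)\sqrt{\frac{\log(1/\delta)}{2n}}$; combining with the expectation bound yields $\mathcal R(\widetilde\Theta)\le\widetilde{\mathcal R}(\widetilde\Theta)+\beta+(2n\beta+M)\sqrt{\frac{\log(1/\delta)}{2n}}$, and substituting $M=1$ and $\beta=\frac{32}{\gamma\min(\lambda_1,\lambda_2)n}$ (so $2n\beta+M=\frac{64}{\gamma\min(\lambda_1,\lambda_2)}+1$) recovers (\ref{eq:bound}) verbatim.

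I do not expect a real obstacle here: the hard content --- that regularized MPLE is $\mathcal O(1/n)$-stable with a constant free of $m$ --- is already supplied by Theorems 1--2 through Lemmas 1--3, and the rest is bookkeeping inside the McDiarmid argument. The points I would take care over are: aligning the replace-one notion of stability used in Theorem 2 with the precise variant assumed in \cite{StabilityAndGeneralization} (at worst a harmless factor two, already absorbed in the constants); confirming the loss is bounded by exactly $1$, not a larger value (immediate from (\ref{eq:loss})); and retaining the extra $M/n$ term in the bounded-difference coefficient, since $\Phi$ changes both through the learned model and through the one replaced term of the empirical risk.
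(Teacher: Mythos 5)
Your proposal is correct and follows the same route as the paper, which simply invokes Theorem~2 together with Theorem~12 of \cite{StabilityAndGeneralization}; you have in effect unpacked that cited theorem (boundedness $M=1$, expectation bound $\mathbb E[\Phi]\le\beta$ via exchangeability, McDiarmid with coefficients $2\beta+M/n$) and your constants $\beta=\frac{32}{\gamma\min(\lambda_1,\lambda_2)n}$ and $2n\beta+M=\frac{64}{\gamma\min(\lambda_1,\lambda_2)}+1$ reproduce (\ref{eq:bound}) exactly. The only detail worth keeping explicit is the one you already flag: Theorem~2 gives replace-one stability, whereas \cite{StabilityAndGeneralization} states Theorem~12 for remove-one stability, and the factor of two is consistently absorbed either way.
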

\begin{proof}
Given Theorem 2, the generalization bound (\ref{eq:bound}) is a direct result of Theorem 12 in
\cite{StabilityAndGeneralization} (Please refer to the reference for details).
\end{proof}
\noindent
\textbf{Remark 3} A notable observation from Theorem 3 is that the
generalization bound (\ref{eq:bound}) of CorrLog is independent of the label number
$m$. Therefore, CorrLog is preferable for MLC with a large number of labels, for which the generalization error still can be bounded with high
confidence.

\noindent
\textbf{Remark 4} While the learning of CorrLog (\ref{eq:estimation}) utilizes the elastic net regularization $R_{en}(\Theta;\lambda_1,\lambda_2,\epsilon)$, where $\epsilon$ is the weighting parameter on the $\ell_1$ regularization to encourage sparsity, the generalization bound (\ref{eq:bound}) is independent of the parameter $\epsilon$. The reason is that $\ell_1$ regularization does not lead to stable learning algorithms \cite{XuCM12}, and only the $\ell_2$ regularization in $R_{en}(\Theta;\lambda_1,\lambda_2,\epsilon)$ contributes to the stability of CorrLog.

\section{Toy Example}
\begin{figure*}[t]
\centering
\subfigure[]{\includegraphics[width=0.66\columnwidth]{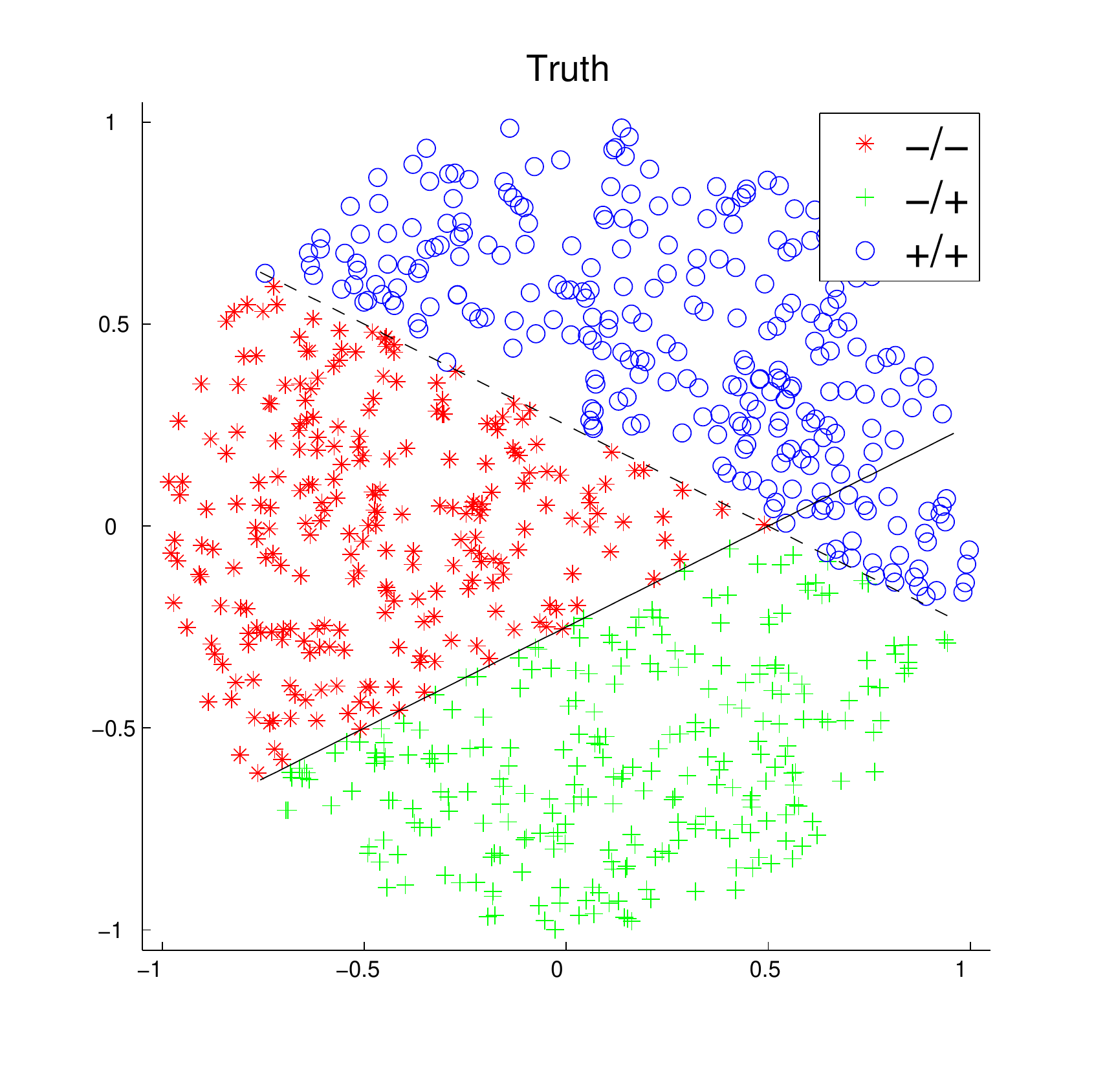}}
\subfigure[]{\includegraphics[width=0.66\columnwidth]{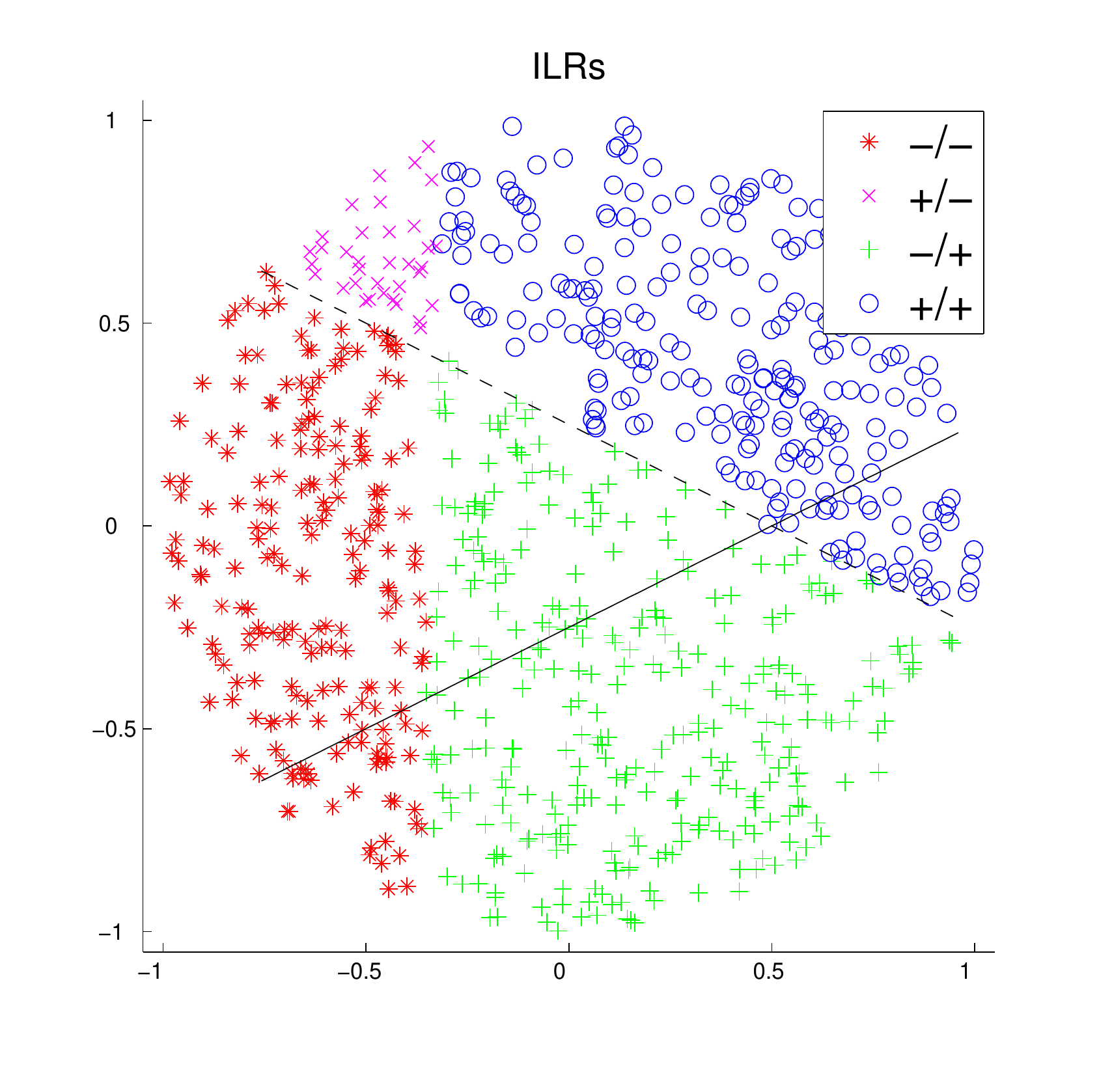}}
\subfigure[]{\includegraphics[width=0.66\columnwidth]{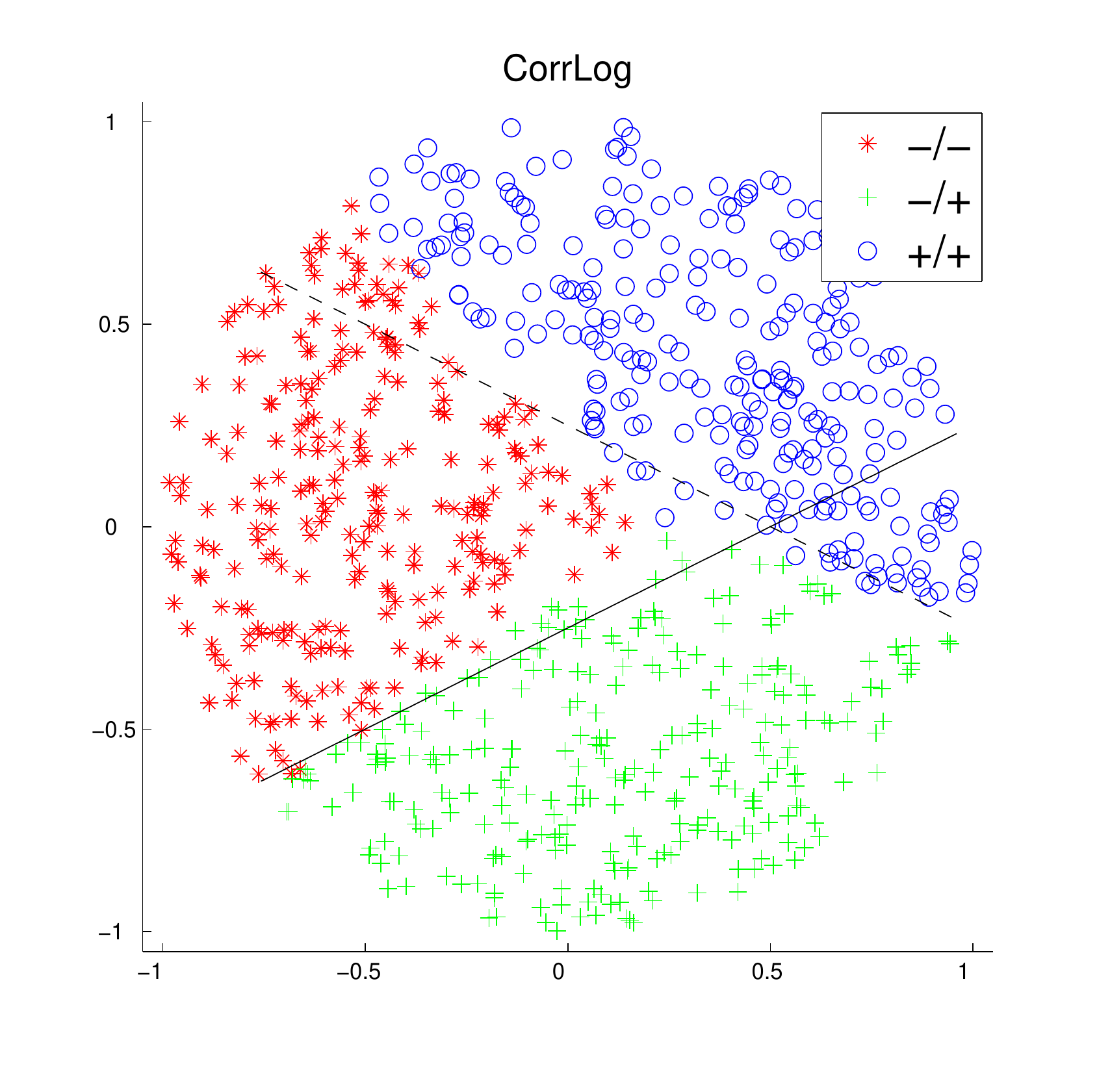}}
\caption{A two-label toy example: (a) true labels of test data; (b) predictions given by ILRs; (c) predictions given by CorrLog.
The dash and solid black boundaries are specified by $\eta_1$ and $\eta_2$.
In the legend, ``$+$'' and ``-'' stand for positive and negative labels, respectively,
e.g., ``$-/+$'' means $\mathbf y_1=-1$ and $\mathbf y_2=1$, and so on.}
\label{fig:toy}
\end{figure*}

We design a simple toy example to illustrate the capacity of CorrLog on label correlation modelling.
In particular, we show that when ILRs fail drastically due to ignoring the label correlations (under-fitting), CorrLog performs well.
Consider a two-label classification problem on a 2-D plane,
where each instance $\mathbf x$ is sampled uniformly from the unit disc $\|\mathbf x\|\le1$
and the corresponding labels $\mathbf y=[\mathbf y_1,\mathbf y_2]$ are defined by
\begin{equation}\notag
\mathbf y_1 = \mathrm{sign}(\eta_1^T\tilde{\mathbf x}) \mbox{~ and ~}
\mathbf y_2 = \mathrm{OR}\left(\mathbf y_1,\mathrm{sign}(\eta_2^T\tilde{\mathbf x})\right),
\end{equation}
where $\eta_1=(1,1, -0.5)$, $\eta_2=(-1,1, -0.5)$
and the augmented feature is $\tilde{\mathbf x}=[\mathbf x^T, 1]^T$.
The $\mathrm{sign}(\cdot)$ function takes value $1$ or $-1$,
and the $\rm OR(\cdot,\cdot)$ operation outputs $1$ if either of its input is $1$.
The definition of $\mathbf y_2$ makes the two labels correlated.
We generate 1000 random examples according
to above setting and split them into training and test sets, each of which contains 500 examples.
During training, we set the parameter $\epsilon$ of the elastic net regularization to 0,
i.e., we actually used an $\ell_2$ regularization,
this is because in this example the model is not sparse in terms of both feature selection and label correlation.
In addition, as the number of the training examples is sufficiently large for this problem,
we suppose there is no over-fitting and tune the regularization parameters for both ILRs and CorrLog
by minimizing the 0-1 loss on the training set.

Figure \ref{fig:toy} shows that true labels of test data, the
predictions of ILRs and the predictions of CorrLog, where different
labels are marked by different colors. In (a), the disc is divided
into three regions, $-/-$, $-/+$ and $+/+$, where the two black
boundaries are specified by $\eta_1$ and $\eta_2$, respectively. In
(b), the first boundary $\eta_1$ is properly learned by ILRs, while the
second one is learned wrongly. This is because the second label is
highly correlated to the first label, but ILRs ignores such
correlation.
As a result, ILRs wrongly predicted the impossible case of $+/-$.
The misclassification rate measured by 0-1 loss is
0.197. In contrast, CorrLog predicts correct labels for most
instances with a 0-1 loss 0.068. Besides, it is interesting to note
that the correlation between the two labels are ``asymmetric'', for
the first label is not affected by the second. This asymmetry
contributes the most to the misclassification of CorrLog, because
the previous definition implies that only symmetric correlations are
modelled in CorrLog.



\section{Multilabel Image Classification}
In this section, we apply the proposed CorrLog to multilabel image classification. In particular, four multilabel image datasets are used in this paper, including
MULAN scene (MULANscene)\footnote{http://mulan.sourceforge.net/},
MIT outdoor scene (MITscene) \cite{oliva2001modeling}, PASCAL VOC 2007 (PASCAL07)
\cite{everingham2010pascal} and PASCAL VOC 2012 (PASCAL12) \cite{everingham2015pascal}.
MULAN scene dataset contains 2047 images with 6 labels, and each image is represented by 294 features.
MIT outdoor scene dataset contains 2688 images in 8 categories. To make it suitable for multilabel experiment, we transformed each category label with several tags according to the image contents of
that category\footnote{The 8 categories are coast, forest, highway, insidecity, mountain, opencountry, street, and tallbuildings. The 8 binary tags are building, grass, cement-road, dirt-road, mountain, sea, sky, and tree.
The transformation follows, $C1\rightarrow(B6,B7)$, $C2\rightarrow(B4,B8)$, $C3\rightarrow(B3,B7)$, $C4\rightarrow(B1)$, $C5\rightarrow(B5,B7)$, $C6\rightarrow(B2,B4,B7)$, $C7\rightarrow(B1,B3,B7)$, $C8\rightarrow(B1, B7)$.
For example, coast $(C1)$ is tagged with sea $(B6)$ and sky $(B7)$.}.
PASCAL VOC 2007 dataset consists of 9963 images with 20 labels.
For PASCAL VOC 2012, we use the available train-validation subset which contains 11540 images.
In addition, two kinds of features are adopted to represent the last three datasets, i.e., the PHOW (a variant of dense SIFT descriptors extracted at multiple scales) features \cite{bosch2007image} and deep CNN (convolutional neural network) features \cite{krizhevsky2012imagenet,Chatfield14}.
Summary of the basic information of the datasets is illustrated in Table \ref{TabData}.
To extract PHOW features, we use the VLFeat implementation \cite{vedaldi2010vlfeat}. For deep CNN features, we use the 'imagenet-vgg-f' model pretrained on ImageNet database \cite{Chatfield14} which is available in MatConvNet matlab toolbox \cite{vedaldi2015matconvnet}.

\begin{table}
\caption{Datasets summary. \#images stands for the number of all images, \#features stands for the dimension of the features, and \#labels stands for the number of labels.}
\label{TabData}
\centering
\begin{tabular}{|c|c|c|c|}\hline
Datasets & \#images & \#features & \#labels \\\hline
MULANscene & 2047 & 294 & 6 \\
MITscene-PHOW & 2688 & 3600 & 8 \\
MITscene-CNN & 2688 & 4096 & 8 \\
PASCAL07-PHOW & 9963 & 3600 & 20 \\
PASCAL07-CNN & 9963 & 4096 & 20 \\
PASCAL12-PHOW & 11540 & 3600 & 20 \\
PASCAL12-CNN & 11540 & 4096 & 20 \\
\hline
\end{tabular}
\end{table}

\subsection{A Warming-Up Qualitative Experiment}

\begin{table*}
\caption{Learned CorrLog label graph on MITscene using $\ell_2$ or elastic net regularization.}
\label{TabCorrLog}
\begin{center}
\renewcommand\tabcolsep{2pt} 
\renewcommand\arraystretch{1.2} 
\begin{tabular}{ |c|c|c|c|c|c|c|c| }
\hline
\mc{8}{|c|}{\textbf{MITscene Images and Tags}}\\\hline
coast & forest & highway & insidecity & mountain & opencountry & street & tallbuilding\\
\includegraphics[width=0.115\textwidth]{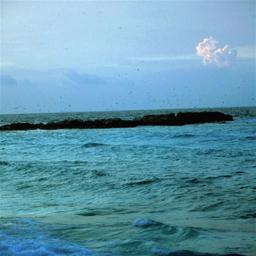}
& \includegraphics[width=0.115\textwidth]{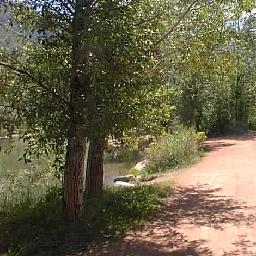}
& \includegraphics[width=0.115\textwidth]{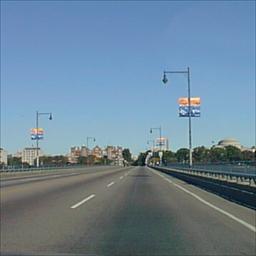}
& \includegraphics[width=0.115\textwidth]{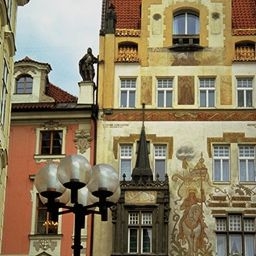}
& \includegraphics[width=0.115\textwidth]{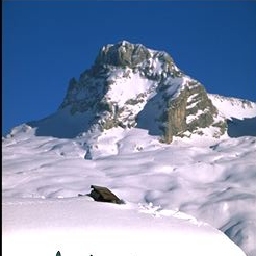}
& \includegraphics[width=0.115\textwidth]{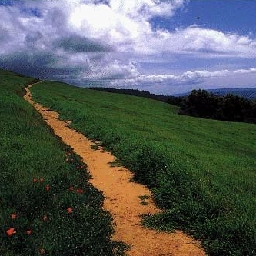}
& \includegraphics[width=0.115\textwidth]{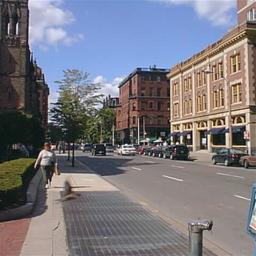}
& \includegraphics[width=0.115\textwidth]{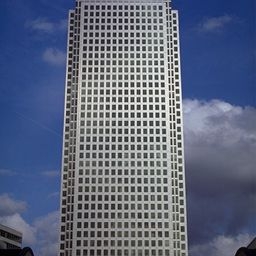}\\\hline
\begin{tabular}{@{}c@{}}sea \\ sky\end{tabular}
& \begin{tabular}{@{}c@{}}dirt-road \\ tree\end{tabular}
& \begin{tabular}{@{}c@{}}cement-road \\ sky\end{tabular}
& \begin{tabular}{@{}c@{}}building\end{tabular}
& \begin{tabular}{@{}c@{}}mountain \\ sky\end{tabular}
& \begin{tabular}{@{}c@{}}grass \\ dirt-road \\ sky\end{tabular}
& \begin{tabular}{@{}c@{}}building \\ cement-road \\ sky\end{tabular}
& \begin{tabular}{@{}c@{}}building \\ sky\end{tabular}\\
\hline\hline
\mc{8}{|c|}{\textbf{Learned CorrLog Label Graph}}\\\hline
\mc{4}{|c}{$\ell_2$ regularization} & \mc{4}{c|}{Elastic net regularization}\\
\mc{8}{|c|}{\includegraphics[width=0.9\textwidth]{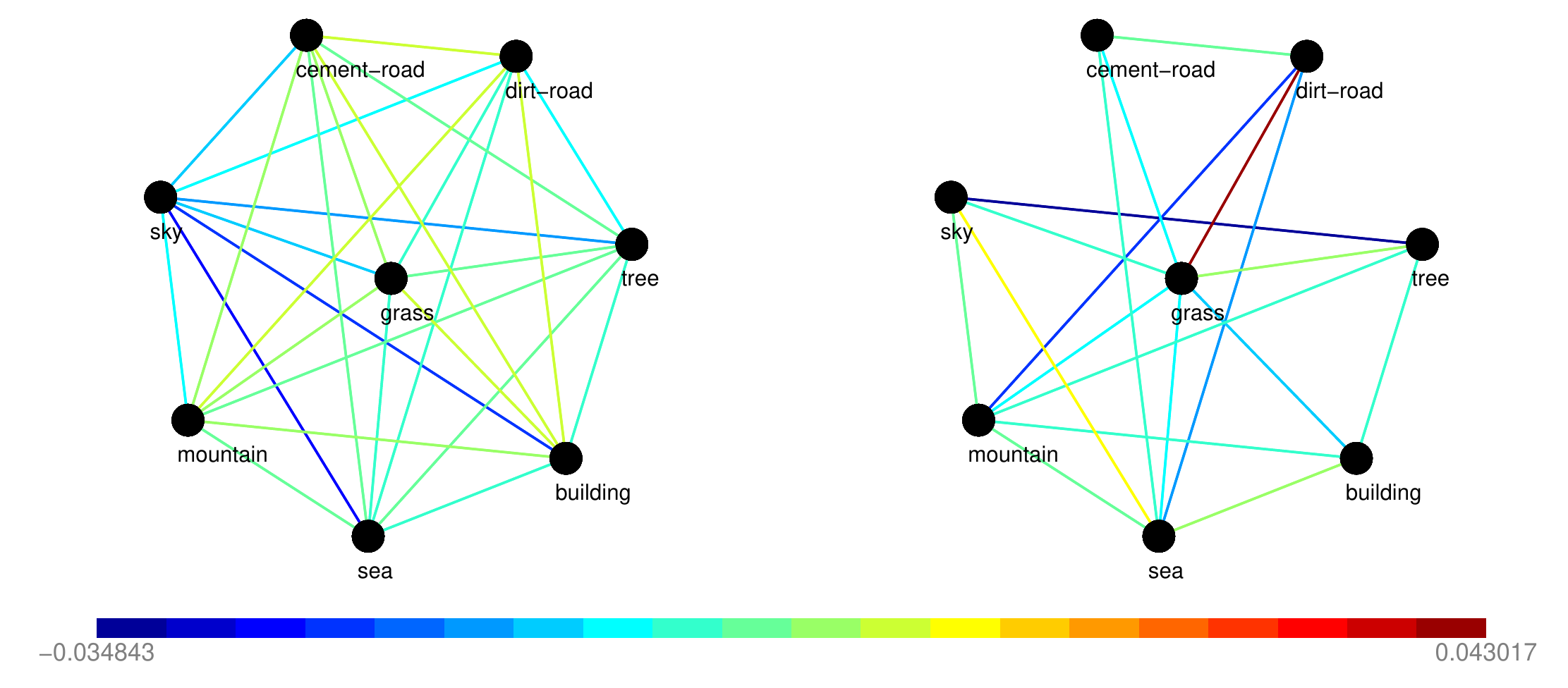}}\\
\hline
\end{tabular}
\end{center}
\end{table*}

As an extension to our previous work on CorrLog, this paper utilizes elastic net to inherit individual advantages of $\ell_2$ and $\ell_1$ regularization. To build up the intuition, we employ MITscene with PHOW features to visualize the difference between $\ell_2$ and elastic net regularization. Table \ref{TabCorrLog} presents the learned CorrLog label graphs using these two types of regularization respectively. In the label graph, the color of each edge represents the correlation strength between two certain labels. We have also listed $8$ representative example images, one for each category, and their binary tags for completeness.

According to the comparison, one can see that elastic net regularization results in a sparse label graph due to its $\ell_1$ component, while $\ell_2$ regularization can only lead to a fully-connected label graph.
In addition, the learned label correlations in elastic net case are more reasonable than that of $\ell_2$.
For example, in the $\ell_2$ label graph, dirt-road and mountain have weekly positive correlation (according to the link between them), though they seldom co-occur on the images in the datasets, while in the elastic net graph, their correlation is corrected as negative.
It has to be confessed that elastic net regularization also discarded some reasonable correlations such as cement-road and building. This phenomenon is a direct result of the compromise between learning stability and model sparsity.
We shall mention that those reasonable correlations can be maintained by decreasing $\lambda_1$, $\lambda_2$ or $\epsilon$, though more unreasonable connections will also be maintained. Thus, applying weak sparsity may impair the model performance.
As a result, it is important to choose a good level of sparsity to achieve a compromise.
In our experiments, CorrLog with elastic net regularization generally outperforms that with $\ell_2$ regularization, which confirms our motivation that appropriate level of sparsity in feature selection and label correlations help boost the performance of MLC.
In the following presentation, we will use CorrLog with elastic net regularization in all experimental comparisons.
To benefit following research, our code is available upon request.

\subsection{Quantitative Experimental Setting}
In this subsection, we present further comparisons between CorrLog and other MLC methods.
First, to demonstrate the effectiveness of utilizing label correlation, we first compare CorrLog's performance with ILRs.
Moreover, four state-of-the-art MLC methods - instance-based learning by logistic regression (IBLR) \cite{IBLR}, multilabel k-nearest neighbour (MLkNN) \cite{MLKNN}, classifier chains (CC) \cite{ClassifierChain}
and maximum margin output coding (MMOC) \cite{zhang2012maximum}
were also employed for comparison study.
Note that ILRs can be regarded as the basic baseline and other methods represent state-of-the-arts.
In our experiments, LIBlinear \cite{fan2008liblinear} $\ell_2$-regularized logistic regression is employed to build binary classifiers for ILRs.
As for other methods, we use publicly available codes in MEKA\footnote{http://meka.sourceforge.net/}
or the authors' homepages.

We used six different measures to evaluate the performance.
These include different loss functions (Hamming loss and zero-one loss) and other popular measures (accuracy, F1 score, Macro-F1 and Micro-F1).
The details of these evaluation measures can be found in \cite{madjarov2012extensive, Rakel, ClassifierChain, PCC}.
The parameters for CorrLog are fixed across all experiments as $\lambda_1=0.001$, $\lambda_2=0.001$ and $\epsilon=1$.
On each dataset, all the methods are compared by 5-fold cross validation.
The mean and standard deviation are reported for each criterion.
In addition, paired t-tests at 0.05 significance level is applied to evaluate the statistical significance of performance difference.

\subsection{Quantitative Results and Discussions}
\begin{table*}
\caption{MULANscene performance comparison via 5-fold cross validation.
Marker $\ast/\circledast$ indicates whether CorrLog is statistically superior/inferior to the compared method (using paired t-test at 0.05 significance level).}
\label{TabMULANscene}
\begin{center}
\begin{tabular}{|c|c|l|l|l|l|l|l|}\hline
\mr{2}{*}{Datasets} & \mr{2}{*}{Methods} & \mc{6}{c|}{Measures} \\\cline{3-8}
 &  & \mc{1}{c|}{Hamming loss} & \mc{1}{c|}{0-1 loss} & \mc{1}{c|}{Accuracy} & \mc{1}{c|}{F1-Score} & \mc{1}{c|}{Macro-F1} & \mc{1}{c|}{Micro-F1} \\\hline
\mr{8}{*}{MULANscene} & CorrLog & 0.095$\pm$0.007 & \textbf{0.341$\pm$0.020} & \textbf{0.710$\pm$0.018} & \textbf{0.728$\pm$0.017} & 0.745$\pm$0.016 & 0.734$\pm$0.017 \\
 & ILRs & 0.117$\pm$0.006 $\ast$ & 0.495$\pm$0.022 $\ast$ & 0.592$\pm$0.016 $\ast$ & 0.622$\pm$0.014 $\ast$ & 0.677$\pm$0.016 $\ast$ & 0.669$\pm$0.014 $\ast$ \\
 & IBLR & \textbf{0.085$\pm$0.004} $\circledast$ & 0.358$\pm$0.016 & 0.677$\pm$0.018 $\ast$ & 0.689$\pm$0.019 $\ast$ & \textbf{0.747$\pm$0.010} & \textbf{0.738$\pm$0.014} \\
 & MLkNN & 0.086$\pm$0.003 & 0.374$\pm$0.015 $\ast$ & 0.668$\pm$0.018 $\ast$ & 0.682$\pm$0.019 $\ast$ & 0.742$\pm$0.013 & 0.734$\pm$0.012 \\
 & CC & 0.104$\pm$0.005 $\ast$ & 0.346$\pm$0.015 & 0.696$\pm$0.015 $\ast$ & 0.710$\pm$0.015 $\ast$ & 0.716$\pm$0.018 $\ast$ & 0.706$\pm$0.014 $\ast$ \\
 & MMOC & 0.126$\pm$0.017 $\ast$ & 0.401$\pm$0.046 $\ast$ & 0.629$\pm$0.049 $\ast$ & 0.639$\pm$0.050 $\ast$ & 0.680$\pm$0.031 $\ast$ & 0.638$\pm$0.049 $\ast$ \\
\hline
\end{tabular}
\end{center}
\end{table*}

\begin{table*}
\caption{MITscene performance comparison via 5-fold cross validation.
Marker $\ast/\circledast$ indicates whether CorrLog is statistically superior/inferior to the compared method (using paired t-test at 0.05 significance level).}
\label{TabMITscene}
\begin{center}
\begin{tabular}{|c|c|l|l|l|l|l|l|}\hline
\mr{2}{*}{Datasets} & \mr{2}{*}{Methods} & \mc{6}{c|}{Measures} \\\cline{3-8}
 &  & \mc{1}{c|}{Hamming loss} & \mc{1}{c|}{0-1 loss} & \mc{1}{c|}{Accuracy} & \mc{1}{c|}{F1-Score} & \mc{1}{c|}{Macro-F1} & \mc{1}{c|}{Micro-F1} \\\hline
\mr{8}{*}{MITscene-PHOW} & CorrLog & 0.045$\pm$0.006 & \textbf{0.196$\pm$0.017} & \textbf{0.884$\pm$0.012} & \textbf{0.914$\pm$0.010} & 0.883$\pm$0.017 & \textbf{0.915$\pm$0.011} \\
 & ILRs & 0.071$\pm$0.002 $\ast$ & 0.358$\pm$0.015 $\ast$ & 0.825$\pm$0.007 $\ast$ & 0.877$\pm$0.005 $\ast$ & 0.833$\pm$0.007 $\ast$ & 0.872$\pm$0.003 $\ast$ \\
 & IBLR & 0.060$\pm$0.003 $\ast$ & 0.243$\pm$0.021 $\ast$ & 0.845$\pm$0.012 $\ast$ & 0.879$\pm$0.008 $\ast$ & 0.848$\pm$0.009 $\ast$ & 0.886$\pm$0.006 $\ast$ \\
 & MLkNN & 0.069$\pm$0.002 $\ast$ & 0.326$\pm$0.022 $\ast$ & 0.810$\pm$0.009 $\ast$ & 0.857$\pm$0.006 $\ast$ & 0.827$\pm$0.009 $\ast$ & 0.869$\pm$0.004 $\ast$ \\
 & CC & 0.047$\pm$0.005 & 0.200$\pm$0.021 & 0.883$\pm$0.012 & 0.913$\pm$0.008 & \textbf{0.883$\pm$0.015} & 0.913$\pm$0.009 \\
 & MMOC & 0.062$\pm$0.010 $\ast$ & 0.274$\pm$0.035 $\ast$ & 0.845$\pm$0.017 $\ast$ & 0.885$\pm$0.014 $\ast$ & 0.846$\pm$0.024 $\ast$ & 0.885$\pm$0.017 $\ast$ \\
\hline
\mr{8}{*}{MITscene-CNN} & CorrLog & \textbf{0.017$\pm$0.004} & 0.088$\pm$0.015 & 0.953$\pm$0.008 & 0.966$\pm$0.006 & \textbf{0.957$\pm$0.011} & \textbf{0.968$\pm$0.006} \\
 & ILRs & 0.020$\pm$0.002 $\ast$ & 0.102$\pm$0.015 $\ast$ & 0.947$\pm$0.006 $\ast$ & 0.962$\pm$0.004 $\ast$ & 0.951$\pm$0.007 $\ast$ & 0.963$\pm$0.005 $\ast$ \\
 & IBLR & 0.022$\pm$0.001 $\ast$ & 0.090$\pm$0.009 & 0.944$\pm$0.004 & 0.957$\pm$0.003 $\ast$ & 0.944$\pm$0.004 $\ast$ & 0.958$\pm$0.003 $\ast$ \\
 & MLkNN & 0.024$\pm$0.002 $\ast$ & 0.104$\pm$0.005 $\ast$ & 0.939$\pm$0.003 $\ast$ & 0.954$\pm$0.003 $\ast$ & 0.941$\pm$0.002 $\ast$ & 0.955$\pm$0.004 $\ast$ \\
 & CC & 0.021$\pm$0.003 $\ast$ & 0.075$\pm$0.008 $\circledast$ & 0.951$\pm$0.005 & 0.962$\pm$0.004 $\ast$ & 0.948$\pm$0.007 $\ast$ & 0.961$\pm$0.005 $\ast$ \\
 & MMOC & 0.018$\pm$0.002 & \textbf{0.062$\pm$0.005} $\circledast$ & \textbf{0.959$\pm$0.003} $\circledast$ & \textbf{0.967$\pm$0.003} & 0.955$\pm$0.005 & 0.967$\pm$0.004 \\
\hline
\end{tabular}
\end{center}
\end{table*}

Tables \ref{TabMULANscene}, \ref{TabMITscene}, \ref{TabPAS07} and \ref{TabPAS12} summarized the experimental results on MULANscene, MITscene, PASCAL07 and PASCAL12 of all six algorithms evaluated by the six measures.
By comparing the results of CorrLog and ILRs,
we can clearly see the improvements obtained by exploiting label correlations for MLC.
Except the Hamming loss, CorrLog greatly outperforms ILRs on all datasets.
Especially, the reduction of zero-one loss is significant on all four datasets with different type of features.
This confirms the value of correlation modelling to joint prediction.
However, it should be noticed that the improvement of CorrLog over ILRs is less significant when the performance is measured by Hamming loss.
This is because Hamming loss treats the prediction of each label individually.

In addition, CorrLog is more effective in exploiting label correlations than other four state-of-the-art MLC algorithms.
For MULANscene dataset, CorrLog achieved comparable results with IBLR and both of them outperformed other methods.
For MITscene dataset, both PHOW and CNN features are very effective representations and boost the classification results.
As a consequence, the performance of CorrLog and the four MLC algorithms are very close to each other.
It is worth noting that, the MMOC method is time-consuming in the training stage, though it achieved the best performance on this dataset.
As for both PASCAL07 and PASCAL12 datasets, CNN features perform significantly better than PHOW features.
CorrLog obtained much better results than the competing MLC schemes, except for the Hamming loss and zero-one loss.
Note that the CorrLog also performs competitively with PLEM and CGM, according to the results reported in \cite{tan2015learning}.

\begin{table*}
\caption{PASCAL07 performance comparison via 5-fold cross validation.
Marker $\ast/\circledast$ indicates whether CorrLog is statistically superior/inferior to the compared method (using paired t-test at 0.05 significance level).}
\label{TabPAS07}
\begin{center}
\begin{tabular}{|c|c|l|l|l|l|l|l|}\hline
\mr{2}{*}{Datasets} & \mr{2}{*}{Methods} & \mc{6}{c|}{Measures} \\\cline{3-8}
 &  & \mc{1}{c|}{Hamming loss} & \mc{1}{c|}{0-1 loss} & \mc{1}{c|}{Accuracy} & \mc{1}{c|}{F1-Score} & \mc{1}{c|}{Macro-F1} & \mc{1}{c|}{Micro-F1} \\\hline
\mr{8}{*}{PASCAL07-PHOW} & CorrLog & 0.068$\pm$0.001 & \textbf{0.776$\pm$0.007} & \textbf{0.370$\pm$0.010} & \textbf{0.423$\pm$0.012} & \textbf{0.367$\pm$0.011} & \textbf{0.480$\pm$0.008} \\
 & ILRs & 0.093$\pm$0.001 $\ast$ & 0.878$\pm$0.007 $\ast$ & 0.294$\pm$0.008 $\ast$ & 0.360$\pm$0.009 $\ast$ & 0.332$\pm$0.008 $\ast$ & 0.404$\pm$0.007 $\ast$ \\
 & IBLR & 0.066$\pm$0.001 $\circledast$ & 0.832$\pm$0.003 $\ast$ & 0.270$\pm$0.005 $\ast$ & 0.308$\pm$0.006 $\ast$ & 0.258$\pm$0.007 $\ast$ & 0.408$\pm$0.009 $\ast$ \\
 & MLkNN & 0.066$\pm$0.001 $\circledast$ & 0.839$\pm$0.006 $\ast$ & 0.256$\pm$0.007 $\ast$ & 0.291$\pm$0.008 $\ast$ & 0.235$\pm$0.006 $\ast$ & 0.392$\pm$0.007 $\ast$ \\
 & CC & 0.091$\pm$0.000 $\ast$ & 0.845$\pm$0.010 $\ast$ & 0.318$\pm$0.005 $\ast$ & 0.379$\pm$0.003 $\ast$ & 0.348$\pm$0.004 $\ast$ & 0.417$\pm$0.001 $\ast$ \\
 & MMOC & \textbf{0.065$\pm$0.001} $\circledast$ & 0.850$\pm$0.003 $\ast$ & 0.259$\pm$0.009 $\ast$ & 0.299$\pm$0.011 $\ast$ & 0.206$\pm$0.007 $\ast$ & 0.392$\pm$0.012 $\ast$ \\
\hline
\mr{8}{*}{PASCAL07-CNN} & CorrLog & 0.038$\pm$0.001 & 0.516$\pm$0.010 & \textbf{0.642$\pm$0.010} & \textbf{0.696$\pm$0.010} & \textbf{0.674$\pm$0.002} & \textbf{0.724$\pm$0.006} \\
 & ILRs & 0.046$\pm$0.001 $\ast$ & 0.574$\pm$0.011 $\ast$ & 0.610$\pm$0.010 $\ast$ & 0.673$\pm$0.009 $\ast$ & 0.651$\pm$0.004 $\ast$ & 0.688$\pm$0.007 $\ast$ \\
 & IBLR & 0.043$\pm$0.001 $\ast$ & 0.554$\pm$0.011 $\ast$ & 0.597$\pm$0.014 $\ast$ & 0.649$\pm$0.015 $\ast$ & 0.621$\pm$0.007 $\ast$ & 0.682$\pm$0.010 $\ast$ \\
 & MLkNN & 0.043$\pm$0.001 $\ast$ & 0.557$\pm$0.010 $\ast$ & 0.585$\pm$0.014 $\ast$ & 0.635$\pm$0.015 $\ast$ & 0.613$\pm$0.006 $\ast$ & 0.668$\pm$0.011 $\ast$ \\
 & CC & 0.051$\pm$0.001 $\ast$ & 0.586$\pm$0.008 $\ast$ & 0.602$\pm$0.008 $\ast$ & 0.668$\pm$0.008 $\ast$ & 0.635$\pm$0.009 $\ast$ & 0.669$\pm$0.008 $\ast$ \\
 & MMOC & \textbf{0.037$\pm$0.000} $\circledast$ & \textbf{0.512$\pm$0.008} & 0.634$\pm$0.009 $\ast$ & 0.684$\pm$0.009 $\ast$ & 0.663$\pm$0.005 $\ast$ & 0.719$\pm$0.004 $\ast$ \\
\hline
\end{tabular}
\end{center}
\end{table*}

\begin{table*}
\caption{PASCAL12 performance comparison via 5-fold cross validation.
Marker $\ast/\circledast$ indicates whether CorrLog is statistically superior/inferior to the compared method (using paired t-test at 0.05 significance level).}
\label{TabPAS12}
\begin{center}
\begin{tabular}{|c|c|l|l|l|l|l|l|}\hline
\mr{2}{*}{Datasets} & \mr{2}{*}{Methods} & \mc{6}{c|}{Measures} \\\cline{3-8}
 &  & \mc{1}{c|}{Hamming loss} & \mc{1}{c|}{0-1 loss} & \mc{1}{c|}{Accuracy} & \mc{1}{c|}{F1-Score} & \mc{1}{c|}{Macro-F1} & \mc{1}{c|}{Micro-F1} \\\hline
\mr{8}{*}{PASCAL12-PHOW} & CorrLog & 0.070$\pm$0.001 & \textbf{0.790$\pm$0.009} & \textbf{0.344$\pm$0.009} & \textbf{0.393$\pm$0.010} & \textbf{0.369$\pm$0.014} & \textbf{0.449$\pm$0.006} \\
 & ILRs & 0.100$\pm$0.001 $\ast$ & 0.891$\pm$0.009 $\ast$ & 0.269$\pm$0.007 $\ast$ & 0.333$\pm$0.008 $\ast$ & 0.324$\pm$0.008 $\ast$ & 0.370$\pm$0.005 $\ast$ \\
 & IBLR & 0.068$\pm$0.001 $\circledast$ & 0.869$\pm$0.009 $\ast$ & 0.219$\pm$0.005 $\ast$ & 0.252$\pm$0.003 $\ast$ & 0.253$\pm$0.007 $\ast$ & 0.345$\pm$0.005 $\ast$ \\
 & MLkNN & 0.069$\pm$0.001 $\circledast$ & 0.883$\pm$0.008 $\ast$ & 0.191$\pm$0.006 $\ast$ & 0.218$\pm$0.005 $\ast$ & 0.213$\pm$0.007 $\ast$ & 0.306$\pm$0.006 $\ast$ \\
 & CC & 0.097$\pm$0.001 $\ast$ & 0.862$\pm$0.012 $\ast$ & 0.291$\pm$0.010 $\ast$ & 0.350$\pm$0.010 $\ast$ & 0.340$\pm$0.007 $\ast$ & 0.380$\pm$0.006 $\ast$ \\
 & MMOC & \textbf{0.067$\pm$0.001} $\circledast$ & 0.865$\pm$0.003 $\ast$ & 0.227$\pm$0.005 $\ast$ & 0.262$\pm$0.007 $\ast$ & 0.200$\pm$0.007 $\ast$ & 0.346$\pm$0.004 $\ast$ \\
\hline
\mr{8}{*}{PASCAL12-CNN} & CorrLog & 0.040$\pm$0.001 & 0.526$\pm$0.010 & \textbf{0.639$\pm$0.007} & \textbf{0.695$\pm$0.007} & \textbf{0.674$\pm$0.006} & \textbf{0.708$\pm$0.006} \\
 & ILRs & 0.051$\pm$0.001 $\ast$ & 0.613$\pm$0.002 $\ast$ & 0.581$\pm$0.005 $\ast$ & 0.649$\pm$0.006 $\ast$ & 0.638$\pm$0.005 $\ast$ & 0.658$\pm$0.005 $\ast$ \\
 & IBLR & 0.045$\pm$0.001 $\ast$ & 0.574$\pm$0.006 $\ast$ & 0.575$\pm$0.009 $\ast$ & 0.627$\pm$0.010 $\ast$ & 0.613$\pm$0.008 $\ast$ & 0.657$\pm$0.006 $\ast$ \\
 & MLkNN & 0.045$\pm$0.002 $\ast$ & 0.575$\pm$0.012 $\ast$ & 0.566$\pm$0.015 $\ast$ & 0.616$\pm$0.017 $\ast$ & 0.604$\pm$0.011 $\ast$ & 0.645$\pm$0.013 $\ast$ \\
 & CC & 0.055$\pm$0.001 $\ast$ & 0.615$\pm$0.010 $\ast$ & 0.579$\pm$0.009 $\ast$ & 0.647$\pm$0.010 $\ast$ & 0.623$\pm$0.005 $\ast$ & 0.643$\pm$0.007 $\ast$ \\
 & MMOC & \textbf{0.039$\pm$0.001} $\circledast$ & \textbf{0.525$\pm$0.005} & 0.619$\pm$0.006 $\ast$ & 0.669$\pm$0.007 $\ast$ & 0.659$\pm$0.004 $\ast$ & 0.699$\pm$0.005 $\ast$ \\
\hline
\end{tabular}
\end{center}
\end{table*}

\subsection{Complexity Analysis and Execution Time}
Table \ref{TabComplexity} summarizes the algorithm computational complexity of all MLC methods.
The training computational cost of both CorrLog and ILRs are linear to the number of labels, while CorrLog causes more testing computational cost than ILRs due to the iterative belief propagation algorithm.
In contrast, the training complexity of CC and MMOC are polynomial to the number of labels.
The two instance-based methods, MLkNN and IBLR, are relatively computational in both train and test stages due to the involvement of instance-based searching of nearest neighbours.
In particular, training MLkNN requires estimating the prior label distribution from training data which needs the consideration of all $k$ nearest neighbours of all training samples. Testing a given sample in MLkNN consists of finding its $k$-nearest neighbours and applying maximum a posterior (MAP) inference.
Different from MLkNN, IBLR constructs logistic regression models by adopting labels of $k$-nearest neighbours as features.

To evaluate the practical efficiency,
Table \ref{TabTime} presents the execution time (train and test phase) of all comparison algorithms under Matlab environment. A Linux server equipped with Intel Xeon CPU ($8$ cores $@$ $3.4$ GHz) and $32$ GB memory is used for conducting all the experiments.
CorrLog is implemented in Matlab language, while ILRs is implemented based on LIBlinear's mex functions.
MMOC is evaluated using the authors' Matlab code which also builds upon LIBlinear.
As for IBLR, MLkNN and CC, the MEKA Java library is called via a Matlab wrapper.
Based on the comparison results, the following observations can be made:
1) the execution time is largely consistent with the complexity analysis, though there maybe some unavoidable computational differences between Matlab scripts, mex functions and Java codes;
2) CorrLog's train phase is very efficient and its test phase is also comparable with ILRs, CC and MMOC;
3) CorrLog is more efficient than IBLR and MLkNN in both train and test stages.

\begin{table}
\caption{Computational complexity analysis. Recall that $n$ stands for the number of train images, $D$ stands for the dimension of the features, and $m$ stands for the number of labels.
Note that $C$ is the iteration number of the max-product algorithm in CorrLog, and $K$ is the number of nearest neighbours in MLkNN and IBLR.}
\label{TabComplexity}
\begin{center}
\begin{tabular}{|c|c|c|}\hline
Methods & Train & Test per image \\\hline
CorrLog & $\mathcal O(nDm)$ & $\mathcal O(Dm+Cm^2)$ \\
ILRs & $\mathcal O(nDm)$ & $\mathcal O(Dm)$ \\
IBLR & $\mathcal O(Kn^2Dm+nDm)$ & $\mathcal O(KnDm+Dm)$ \\
MLkNN & $\mathcal O(Kn^2Dm)$ & $\mathcal O(KnDm)$ \\
CC & $\mathcal O(nDm+nm^2)$ & $\mathcal O(Dm+m^2)$ \\
MMOC & $\mathcal O(nm^3+nDm^2+n^4)$ & $\mathcal O(m^3)$ \\
\hline
\end{tabular}
\end{center}
\end{table}

\begin{table*}
\caption{Average execution time comparison on all datasets.}
\label{TabTime}
\begin{center}
\resizebox{\textwidth}{!}{
\begin{tabular}{|c|c|c|c|c|c|c|c|c|c|c|c|c|c|c|}\hline
 & \mc{2}{c|}{MULANscene} & \mc{2}{c|}{MITscene-PHOW} & \mc{2}{c|}{MITscene-CNN}
  & \mc{2}{c|}{PASCAL07-PHOW} & \mc{2}{c|}{PASCAL07-CNN} & \mc{2}{c|}{PASCAL12-PHOW} & \mc{2}{c|}{PASCAL12-CNN}\\\cline{2-15}
 & Train & Test & Train & Test & Train & Test & Train & Test & Train & Test & Train & Test & Train & Test\\\hline
CorrLog & 0.09 & 1.74 & 2.80 & 2.12 & 2.46 & 2.08 & 8.94 & 10.68 & 8.35 & 11.08 & 9.67 & 12.58 & 8.62 & 13.06\\
ILRs & 2.54 & 0.02 & 39.50 & 0.37 & 7.50 & 0.15 & 872.77 & 4.79 & 122.73 & 1.56 & 1183.45 & 5.59 & 161.71 & 1.83\\
IBLR & 12.01 & 2.63 & 218.98 & 53.31 & 215.28 & 52.19 & 3132.18 & 779.15 & 2833.94 & 688.53 & 4142.75 & 1034.86 & 3824.06 & 947.90\\
MLkNN & 10.29 & 2.36 & 188.08 & 45.87 & 176.52 & 42.78 & 2507.51 & 628.61 & 2232.19 & 551.26 & 3442.21 & 863.17 & 3020.29 & 779.50\\
CC & 5.48 & 0.06 & 40.71 & 0.55 & 26.64 & 0.65 & 74315.65 & 7.48 & 8746.82 & 8.15 & 137818.99 & 8.38 & 15926.62 & 9.57\\
MMOC & 851.98 & 0.51 & 2952.77 & 0.70 & 2162.13 & 0.48 & 86714.47 & 33.08 & 38403.54 & 17.75 & 97856.16 & 31.43 & 45541.01 & 20.66\\
\hline
\end{tabular}
}
\end{center}
\end{table*}

\section{Conclusion}
We have proposed a
new MLC algorithm CorrLog and applied it to multilabel image classification.
Built upon IRLs, CorrLog explicitly models the
pairwise correlation between labels, and thus improves the
effectiveness for MLC. Besides, by using the elastic net regularization, CorrLog is able to exploit the sparsity in both feature selection and label correlations, and thus further boost the performance of MLC. Theoretically, we have shown that the
generalization error of CorrLog is upper bounded and is independent
of the number of labels. This suggests the generalization bound
holds with high confidence even when the number of labels is large.
Evaluations on four benchmark multilabel image datasets confirm the effectiveness of CorrLog for multilabel image classification and show its competitiveness with the state-of-the-arts.


\bibliography{CorrLog}
\bibliographystyle{IEEEtran}

\end{document}